\def\methodName{\texttt{CANON}}
\def\inter{\texttt{CANON-Inter}}
\def\intra{\texttt{CANON-Intra}}
\def\eff{\texttt{CANON-Eff}}
\def\dynamic{\texttt{CANON-Dynamic}}
\def\eqref#1{equation~\ref{#1}}
\def\1{\bm{1}}
\DeclareMathAlphabet{\mathsfit}{\encodingdefault}{\sfdefault}{m}{sl}
\SetMathAlphabet{\mathsfit}{bold}{\encodingdefault}{\sfdefault}{bx}{n}
\newtheorem{theorem}{Theorem}
\definecolor{my_green}{RGB}{51,102,0}
\definecolor{my_purple}{RGB}{160, 43, 147}
\definecolor{my_blue}{RGB}{15, 158, 213}
\definecolor{darkpink}{RGB}{255, 20, 147}
\definecolor{high}{HTML}{6AD4DD}
\definecolor{math}{HTML}{FFA09B}
\NewDocumentCommand{\yafu}
{ mO{} }{\textcolor{red}{\textsuperscript{\textit{yafu}}\textsf{\textbf{\small[#1]}}}}
\NewDocumentCommand{\fixed}
{ mO{} }{\textcolor{blue}{\textsuperscript{\textit{yafu}}\textsf{\textbf{\small[#1]}}}}
\title{Conditional Advantage Estimation for \\Reinforcement Learning in Large Rea-\\soning Models}
\author{
\textbf{Guanxu Chen}$^{1,2}\thanks{This work was done during an internship at Shanghai Artificial Intelligence Laboratory, supervised by Dongrui Liu. Our code is available at \textcolor{darkpink}{\faGithub \href{https://github.com/biuboomc/CANON}{\methodName{}}}.}$\quad
\textbf{Yafu Li}$^{2}$$^\dagger$\quad
\textbf{Yuxian Jiang}$^{3}$\quad
\textbf{Chen Qian}$^{4}$\quad
\textbf{Qihan Ren}$^{1}$\quad
\\ ~\textbf{JingYi Yang}$^{3,2}$\quad
\textbf{Yu Cheng}$^{5}$\quad
\textbf{Dongrui Liu}$^{2}$$^\dagger$\quad
\textbf{Jing Shao}$^{2}\thanks{Corresponding Author.}$\quad
\vspace{1em}\\
$^1$ Shanghai Jiao Tong University, 
$^2$ Shanghai Artificial Intelligence Laboratory,\\
$^3$ Fudan University, 
$^4$ Renmin University of China, 
$^5$ The Chinese University of Hong Kong\\
\it\footnotesize ~~lm.cgx@sjtu.edu.cn\quad \quad yafuly@gmail.com\quad\quad\{liudongrui, shaojing\}@pjlab.org.cn
}
\begin{document}

\maketitle
% !TEX root = ../../iclr2025_conference.tex

\begin{abstract}
Reinforcement Learning with Verifiable Rewards (RLVR) for large language models (LLMs) has achieved remarkable progress in enhancing LLMs’ reasoning capabilities on tasks with clear correctness criteria, such as mathematical reasoning tasks. Several training metrics, such as entropy or response length, have been observed to correlate with different reasoning behaviors in reinforcement learning. Prior approaches incorporate such priors through reward or advantage shaping, which often relies on hand-crafted penalties and preferences (e.g., higher-is-better or lower-is-better). However, without careful hyper-parameter tuning, these directional priors can be overly biased and may lead to failure. To this end, we introduce \textit{\textbf{C}onditional adv\textbf{AN}tage estimati\textbf{ON}} (\methodName{}), amplifying the impact of the target metric without presuming its direction. Specifically, \methodName{} regroups the sampled responses into two groups based on the higher or lower value of a target metric, measures which metric trend contributes to better performance through inter-group comparison, and identifies the better response within the same group. In summary, \methodName{} based on entropy consistently outperforms prior methods across three LLMs on both math reasoning and high-complexity logic tasks. When applied to response length, \methodName{} further improves token efficiency, yielding a more favorable Pareto frontier in the performance–cost trade-off.
% \textcolor{blue}{In this way, \methodName{} based on Entropy achieves better performance across three LLMs in math reasoning and high-complexity logic reasoning tasks. Furthermore, \methodName{} targeted on response length improves the token efficiency and makes a better Pareto frontier in the performance-cost trade-off.}
% From this perspective, \methodName{} improves accuracy by \textbf{+1.9} points on math reasoning tasks and achieves a \textbf{+5.2} gain on high-complexity logic reasoning problems. Through the scheduling of two components, \methodName{} outperforms DR.GRPO across three models and two tasks, achieving a superior and more comprehensive performance. Furthermore, \methodName{} allows precise control over target metrics, such as improving token efficiency by intervening on the inter-group comparison, making a better Pareto frontier in the performance-cost trade-off.
\end{abstract}
\begin{figure}[h]
    \vspace{-15pt}
    \centering
    \includegraphics[width=1.0\linewidth]{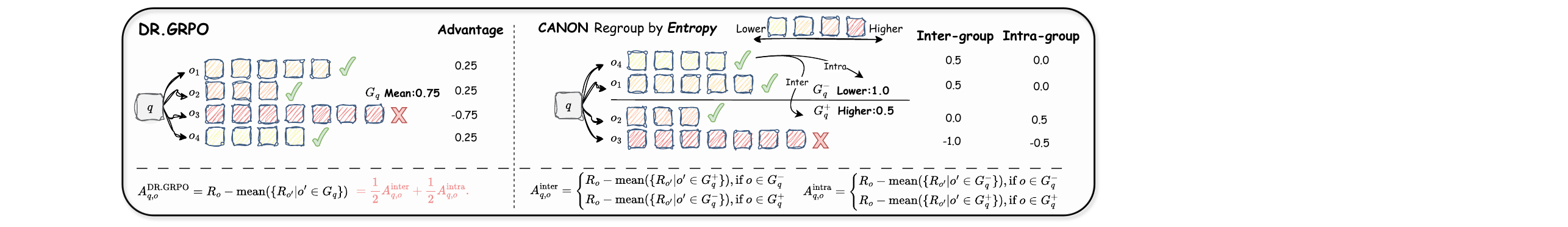}
    \vspace{-15pt}
    % \caption{\textbf{LUFFY}: \textbf{L}earning to reason \textbf{U}nder o\textbf{FF}-polic\textbf{Y} guidance.
    % }
    % \vspace{-10pt}
    \caption{\small
    \methodName{} amplifies the impact of specific metric changes by regrouping the sampled responses into two groups based on the values of a given metric. Rather than comparing against the mean value of all responses like DR.GRPO, \methodName{} selects the direction of metric change that offers greater contributions to performance through inter-group comparison and favors responses that exhibit better performance within groups following the same trend in its intra-group comparison. DR.GRPO can be expressed as the average of \methodName{}’s two advantage estimates and is therefore a special case of \methodName{}.
    }
    \vspace{-15pt}
    \label{fig:INTRO}
\end{figure}
\section{Introduction}
Recently, Large Reasoning Models (LRMs) such as Gemini 2.5 Pro \citep{gemini}, DeepSeek-R1 \citep{guo2025deepseek}, and OpenAI-o1 \citep{o1_card}, continue to push the boundaries of performance on reasoning tasks. A key technique driving this success is Reinforcement Learning with Verifiable Rewards (RLVR), which enables models to refine answers through multi-step reflection \citep{zhang2025survey}. Algorithms designed for RLVR, most prominently GRPO \citep{shao2024deepseekmath} and its variants (e.g., DR.GRPO, \cite{liu2025understanding}), have become central to achieving superior performance.

% ##### v4 ######
In previous works, some training metrics are observed to be closely correlated with model behavior, which can guide the training process and improve LLMs' performance \citep{hassid2025don,gandhi2025cognitive,wang2025beyond}. To incorporate such a human prior, some methods integrate these metrics through reward shaping \citep{cmu1,o1} and advantage shaping \citep{chen2025seed,cheng2025reasoning} to guide the model's reasoning behavior. For example, an over-length penalty is used to boost reasoning efficiency, and the entropy signal is leveraged to maintain exploration for better performance. 

However, these methods usually introduce human priors by adding penalty and reward terms, which hold handcrafted priors that specific metrics are either to be higher-is-better or to be lower-is-better. 
% These priors are often overly biased and drive specific metrics up or down directly, leading to failures of enhancing LLMs' performance. 
Without careful hyper-parameter selection, these priors can be overly biased and drive specific metrics up or down directly, thus failing to enhance performance robustly. Simple handcrafted priors towards one specific direction are hard to work in different scenarios. For instance, higher-entropy responses tend to be exploratory and may correctly answer complex questions, whereas lower-entropy responses exhibit higher certainty and achieve greater accuracy on most questions within their capability \citep{cheng2025reasoning,prabhudesai2025maximizing,wang2025beyond}. Therefore, we aim to amplify the impact of specific metric changes without presupposing preferences, naturally identifying inherent tendencies in model rollouts that can be leveraged to facilitate learning of beneficial behaviors, such as enhancing exploration or improving reasoning efficiency.

To this end, we regroup the sampled responses into two groups based on the higher or lower values of a given metric during the process of RLVR training. 
Specifically, we can sort the sampled responses according to the value and split them into two groups. Based on this, we propose \textit{\textbf{C}onditional adv\textbf{AN}tage estimati\textbf{ON}} (\methodName{}), which computes the inter-group advantage by comparing a response with the group that it does not belong to, and gets the intra-group advantage across its own group conversely. The inter-group advantage reveals which trend of metrics leads to higher accuracy. Meanwhile, the intra-group advantage identifies better responses within the same group. 

Taking the metric of entropy as an example, if groups with lower entropy (i.e., higher certainty) yield higher average rewards, the inter-group advantage tends to select correct responses with low entropy, efficiently exploiting existing features to boost performance. In contrast, correct rollouts with higher entropy receive more advantages in the intra-group comparison because the average reward of their group is lower, thereby encouraging truly effective exploration. We theoretically prove that when the two groups have equal size, the inter-group advantage amplifies the impact of the grouping metric on the advantage computation. In this setting, DR.GRPO can be formulated as a uniform weighting of these two advantages, which is a special case of \methodName{}.

We consider the metrics of generation entropy and response length, evaluating the effectiveness of \methodName{} on three open-weight LLMs across six math reasoning benchmarks and three challenging logic reasoning tasks. 
Empirical results show that emphasizing the inter-group advantage based on entropy yields a \textbf{1.9}-point accuracy gain on math tasks. In contrast, for high-complexity reasoning problems, the intra-group advantage proves crucial, achieving a \textbf{5.2}-point improvement on the most challenging subset. Through scheduling of these advantages, \methodName{} further achieves a superior and comprehensive performance across three models and two tasks. Furthermore, \methodName{} based on response length substantially enhances reasoning efficiency, establishing a new Pareto frontier in the performance–efficiency trade-off. In low-token-budget scenarios for math tasks, it achieves \textbf{2.63×} higher performance and reduces token consumption by \textbf{45.5\%} at the same performance level.
\vspace{-15pt}
\section{Related Work}
\vspace{-5pt}
\textbf{Advantage Estimations in Reinforcement Learning.} In PPO, the advantage estimation is provided by Generalized Advantage Estimation (GAE, \cite{schulman2015high}).To avoid the computational cost of the critic model, several methods, such as ReMax \citep{li2023remax}, RLOO \citep{ahmadian2024back}, GRPO \cite{shao2024deepseekmath}, and REINFORCE++ \citep{hu2025reinforce++}, utilize alternative techniques like baseline reward and group-relative rewards for advantage estimation. ReMax compares the rewards with the baseline reward from the greedy decoding response. REINFORCE++ estimates the advantage by the normalization operation across the global batch for all queries. RLOO and GRPO estimate the advantage in a group relative manner. RLOO computes the average rewards of all other solutions in the group as the baseline reward, and GRPO utilizes the normalized rewards among the sampled solutions as the advantage estimation. Compared to GRPO, our method splits sampled responses into two groups based on specific conditions and selects the appropriate condition through inter- and intra-group comparisons, thereby efficiently optimizing key patterns that boost task performance.

\textbf{Reinforcement Learning with Verifiable Rewards.} RLVR leverages the existing RLHF objective \citep{ppo} but replaces the reward model with a verification function, which is available in domains with verifiable answers, such as mathematics reasoning tasks \citep{guo2025deepseek,lambert2024tulu}. \cite{yu2025dapo,liu2025cpgd,chen2025minimax} consider the importance sampling techniques and contribute novel training paradigms and optimization objectives for better and more stable reasoning capabilities. Due to the sparse rewards during training, past methods utilize not only accuracy-based rewards but also explicitly integrate additional signals through reward shaping \citep{cmu1,o1} and advantage shaping \citep{chen2025seed,cheng2025reasoning} to guide the model's reasoning and reflection. \cite{cmu1} and \cite{o1} utilize an over-length penalty to boost reasoning efficiency. \cite{chen2025seed} and \citep{cheng2025reasoning} consider the entropy as a measure of exploration and reshape the advantage computation. \cite{gandhi2025cognitive} also observes four key cognitive behaviors of initial reasoning behaviors and strengthens the capacity for self-improvement. However, these methods usually introduce human priors by adding penalty and reward terms, which hold handcrafted priors that can be overly biased and may fail to enhance performance without careful hyper-parameter selection. Our work amplifies the impact of specific metric changes without presupposing preferences, leveraging them to facilitate learning of beneficial behaviors.

% !TEX root = ../iclr2025_conference.tex
\vspace{-10pt}
\section{Preliminaries}
\vspace{-5pt}
Proximal Policy Optimization (PPO, \cite{ppo}) is a widely used method for policy optimization of LLMs. PPO utilizes the clip mechanism to update policy stably. PPO maximizes the following optimization objectives.
\begin{align}
% \small
\mathcal{J}_\text{PPO}(\theta) = \mathbb{E}_{q\sim \mathcal{D},o\sim\pi_{\theta_{\text{old}}}(\cdot\mid q)}
\Bigg[ \frac{1}{|o|}\sum_{t=1}^{|o|}
\min \Bigg( r_{o,t}(\theta) \hat{A}_t,  
\ \text{clip}_{1 - \varepsilon}^{1 + \varepsilon} (r_{o_i,t}(\theta)) \hat{A}_t \Bigg) \Bigg]~,
\end{align}
where $\pi_{\theta_{\text{old}}}$ and $\pi_{\theta}$ are used to denote the policy model before and after the update. $q$ is a query sampled from the data distribution $\mathcal{D}$, and the output $o$ is generated by $\pi_{\theta_{\text{old}}}$. The clipping function with clip ratio $\varepsilon$ is computed as $\text{clip}_a^b(x)=\max(\min(x,a),b)$ and the importance sampling ratio at time step $t$ is defined as $r_{o,t}(\theta) = \frac{\pi_{\theta}(o_t\mid q,o_{<t})}{\pi_{\theta_{\text{old}}}(o_t\mid q,o_{<t})}$.

To avoid the computational cost of the critic model, GRPO \citep{shao2024deepseekmath} estimates the advantage in a group relative manner. They sample $G$ different solutions for the current query $q$ as the group $G_q:=\{o|o\sim\pi_{\theta_{\text{old}}}(.|q)\}$, and calculate the normalized rewards as advantages within the group $G_q$.
\begin{align}
    \hat{A}^{\text{GRPO}}_{q,o,t}  =  \frac{R_{o} - \text{mean}(\{R_{o^\prime }|o^\prime \in G_q\})}{\text{std}(\{R_{o^\prime }|o^\prime \in G_q\})}.
\end{align}

Due to the success of DeepSeek-R1, several studies have proposed improvements based on GRPO.
% under the settings of zero-RL. 
DR.GRPO \citep{liu2025understanding} uses the GRPO advantages without standard deviation normalization 
% \begin{align}
%     \hat{A}^{\text{DR.GRPO}}_{q,o,t}  =  R_{o} - \text{mean}(\{R_{o^\prime }|o^\prime \in G_q\}),
% \end{align}
and develops a token-level loss without length bias.

\begin{figure}[t]
    \centering
    \includegraphics[width=0.95\linewidth]{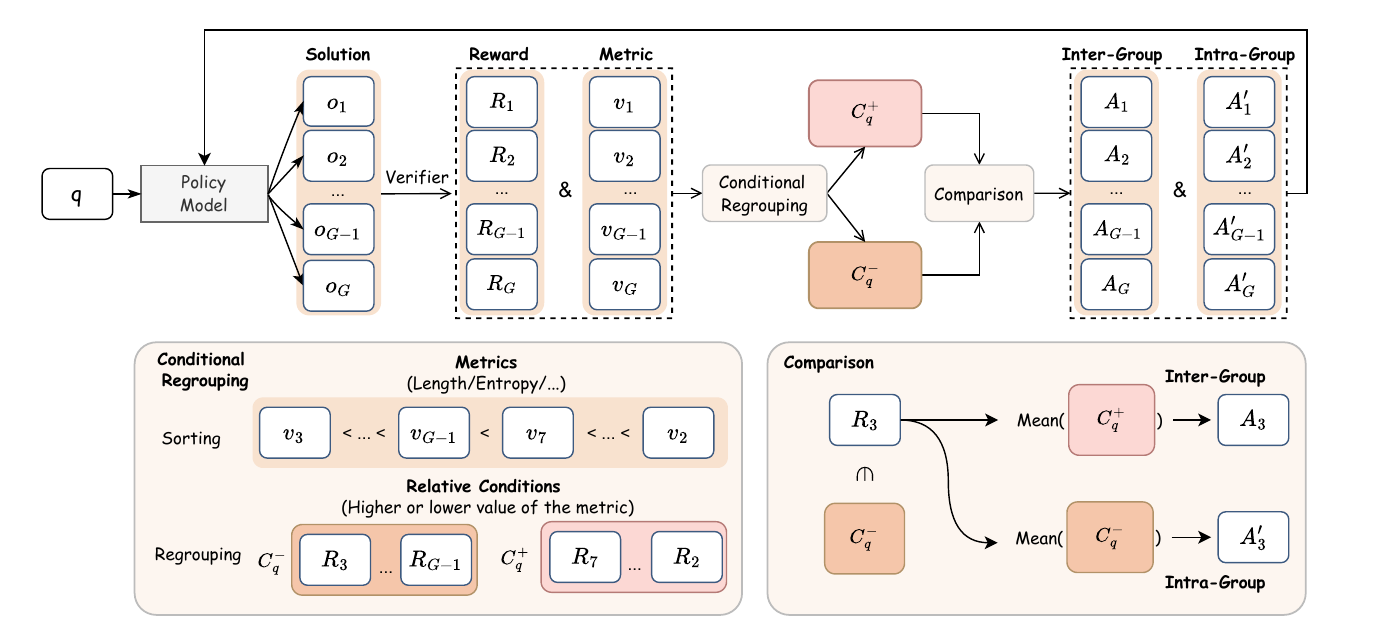}
    % \caption{\textbf{LUFFY}: \textbf{L}earning to reason \textbf{U}nder o\textbf{FF}-polic\textbf{Y} guidance.
    % }
    % \vspace{-10pt}
    \caption{
    Overview of \methodName{}. \methodName{} regroups all the sampled responses based on the value of a specific metric, and computes the advantages through inter-group and intra-group comparison.
    }
    \vspace{-15pt}
    \label{fig:CG}
\end{figure}
\vspace{-1pt}
\section{Conditional Advantage Estimation} 
\vspace{-5pt}
\label{sec:method}
Group-based advantage estimation methods, such as GRPO, typically use the average reward of all sampled responses within the group as a baseline reward. This may fail to provide a clear feedback signal for policy optimization due to the ambiguity of the comparison target. We propose \methodName{}, which performs conditional regrouping by splitting all sampled responses into two groups based on the value of a specific metric. Leveraging these two groups, inter-group advantage identifies the metric trend that yields higher accuracy through cross-group comparison, while intra-group advantage selects superior responses within the same trend and prioritizes correct answers from the group with a lower average reward. 
% \yafu{rm: We find that GRPO is a special case of the two advantages proposed by CANON and can be recovered through uniform weighting.}

% We also showcase how this formulation benefits RL training based on generation entropy and response length, respectively. By tuning such a balance, we can make more effective use of the favorable conditions, thereby improving training efficiency and model performance. What's more, by applying preference-based weight adjustments within the inter-group advantage, we can align the training process with our goals of training and achieve diverse training outcomes. For instance, slightly reducing the weights for longer responses can help us train reasoning models with higher token efficiency.

\subsection{Conditional Regrouping}
To explicitly introduce a comparison target, we regroup all the sampled responses based on specific conditions. Given any condition $c$, we denote the set of all outputs for the current query $q$ that satisfy this condition in the sampled group $G_q$ as $C^+_q:=\{o|o~\text{satisfy}~c,~o\in G_q\}$. The set of outputs that do not satisfy the condition can be denoted by $C^-_q=G_q\setminus C_q$. In this work, we focus on studying the relative conditions given by the training metrics, such as the entropy and length of the sampled responses. Specifically, we divide the responses into two non-overlapping groups based on the value of the metrics, as shown in Figure \ref{fig:CG}.

\subsection{Advantage Estimation Based On Regrouping.}
\label{sec:cae}
Given two groups, we can compute the inter-group advantage through comparison between different groups.
\begin{align}
\hat{A}^{\text{inter}}_{q,o,t}  &=\left\{
                \begin{array}{ll}
                 R_{o} - \text{mean}(\{R_{o^\prime}|o^\prime \in G_q^+\}), \text{if} ~ o \in G_q^-\\
                  \\
                R_{o} - \text{mean}(\{R_{o^\prime}|o^\prime \in G_q^-\}), \text{if} ~ o \in G_q^+\\
                \end{array}\right. .
\end{align}
Meanwhile, we also compute the intra-group advantage by comparing each response with the mean reward of its own group. 
\begin{align}
\hat{A}^{\text{intra}}_{q,o,t}  &=\left\{
                \begin{array}{ll}
                 R_{o} - \text{mean}(\{R_{o^\prime}|o^\prime \in G_q^+\}), \text{if} ~ o \in G_q^+\\
                  \\
                R_{o} - \text{mean}(\{R_{o^\prime}|o^\prime \in G_q^-\}), \text{if} ~ o \in G_q^-\\
                \end{array}\right. .
\end{align}
Although this may appear similar to the estimation of DR.GRPO within a smaller scope, due to the differing average advantages between groups, the intra-group advantage prioritizes correct responses from the group with a lower average reward ($1 - \text{mean}(\{R_{o^\prime}|o^\prime \in G_q^+\} > 1 - \text{mean}(\{R_{o^\prime}|o^\prime \in G_q^-\} \text{~when~} \text{mean}(\{R_{o^\prime}|o^\prime \in G_q^+\} < \text{mean}(\{R_{o^\prime}|o^\prime \in G_q^-\}$   ). We can further combine the above two advantages into a unified formulation.
\begin{align}
\hat{A}_{q,o,t}^{\methodName{}} & = \mu \hat{A}^{\text{inter}}_{q,o,t} + (1-\mu)\hat{A}^{\text{intra}}_{q,o,t},
\label{eq:canon}
\end{align}
where $\mu$ controls the balance between the inter-group and intra-group advantage. Figure \ref{fig:CG} demonstrates a concise case of the computation of \methodName{}.

To ensure that the advantages introduced by conditional regrouping provide a clearer contrastive signal, we theoretically analyze the situations under which inter-group advantage, compared to DR.GRPO, yields a stronger advantage signal in response to reward gaps under specific conditions.
\begin{theorem}[Situations with clearer advantage signal (proved in Appendix \ref{derivation})]
Suppose that condition c is based on numerical comparisons and can be derived through sorting of metrics. Further assume that the sampled response $o$ to query $q$ satisfy condition c with probability $p \in (0,1)$, and $\mathbf{E}_{o \text{~satisfy~}c}[R_o] \neq \mathbf{E}_{o \text{~not ~satisfy~}c}[R_o]$. Then, we have:
\begin{align}
\frac{|\hat{A}^{\text{inter}}_{q,o,t}|}{|\hat{A}^{\text{DR.GRPO}}_{q,o,t}|} > 1 , \text{ only when }|C^+_q|=|C^-_q| \text{ if  }|C^+_q|\text{ is a constant.  }
\end{align}
\label{thm1}
\end{theorem}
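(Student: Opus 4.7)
The plan is to evaluate both advantages conditional on group membership, express the amplification ratio on each group as a function of $p := n^+/n$, and then read off $p = 1/2$ as the unique split at which a single consistent amplification value governs every response in $G_q$.

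Setup and reduction. I fix $n^\pm := |C_q^\pm|$ as constants with $n = n^+ + n^-$, and set $\mu^\pm := \mathbf{E}_{o\text{ (not )satisfies }c}[R_o]$ so that $\Delta := \mu^+ - \mu^- \ne 0$ by hypothesis. Because $c$ is sorting-based and $|C_q^+|$ is fixed, the in-group sample means $\bar R^+, \bar R^-$ are unbiased for $\mu^+, \mu^-$. The identity $\bar R = p\bar R^+ + (1-p)\bar R^-$ gives $\bar R - \bar R^- = p(\bar R^+ - \bar R^-)$ and $\bar R - \bar R^+ = -(1-p)(\bar R^+ - \bar R^-)$, from which I obtain the conditional expectations
\begin{equation*}
\mathbf{E}[\hat{A}^{\text{inter}} \mid o \in C^\pm] = \pm\Delta, \quad \mathbf{E}[\hat{A}^{\text{DR.GRPO}} \mid o \in C^+] = (1-p)\Delta, \quad \mathbf{E}[\hat{A}^{\text{DR.GRPO}} \mid o \in C^-] = -p\Delta.
\end{equation*}

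Per-group ratios and the balanced characterization. Taking absolute values, the amplification ratios become $r_+ := 1/(1-p)$ on $C_q^+$ and $r_- := 1/p$ on $C_q^-$. Each is individually greater than $1$ for any $p \in (0,1)$, but the theorem writes $|\hat{A}^{\text{inter}}|/|\hat{A}^{\text{DR.GRPO}}|$ as a single ratio that must apply across the whole sampled group $G_q$. For such a uniform value $>1$ to hold, the two conditional ratios must coincide: $r_+ = r_- \iff 1/(1-p) = 1/p \iff p = 1/2$, i.e., $|C_q^+| = |C_q^-|$. At this balanced split both ratios collapse to the common value $2 > 1$; at any other split the ratio fractures into two unequal numbers $1/(1-p) \ne 1/p$, so no single group-wide amplification $>1$ exists. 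Together with the ``$|C_q^+|$ constant'' hypothesis, which makes the split parameter $p$ well-defined, this yields the theorem's ``only when'' clause.

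Main obstacle. The principal subtlety is that the sorting-based condition couples $C_q^+, C_q^-$ to the sampled rewards, so $\bar R^\pm$ is not an i.i.d. average of $R_o \mid o \in C^\pm$. I will handle this by conditioning on the order statistics of the grouping metric and invoking the stated ``satisfies $c$ with probability $p$'' hypothesis to decouple the conditional reward distributions; once the decoupling is in place, the unbiased estimates $\bar R^\pm \to \mu^\pm$ and the closed-form per-group ratios above follow, and the coincidence condition $r_+ = r_-$ pins down $p = 1/2$ as the only configuration consistent with the theorem's single-ratio statement.
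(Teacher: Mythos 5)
Your proof rests on a conflation that the paper's argument is specifically designed to avoid. In the paper's derivation two distinct quantities are in play: the probability $p = \mathrm{P}(o \in \mathcal{C} \mid q, \theta)$ that a sampled response satisfies condition $c$ (an intrinsic property of the model's rollouts), and the split fraction $\lambda = |C_q^+|/|G_q|$ (a fixed design choice, constant because $|C_q^+|$ is constant). You define $p := n^+/n$, which identifies the two and implicitly assumes that $C_q^+$ coincides exactly with the set $\mathcal{C}$ of responses satisfying $c$. Only under that identification are your in-group means $\bar R^\pm$ unbiased for $\mu^\pm$; in general, because the top-$k$ cut after sorting need not align with the threshold defining $c$, the group means are mixtures --- e.g.\ for $\lambda \ge p$, $\mathbf{E}_{o \in C_q^+}[R_o] = \frac{p}{\lambda}a_+ + \frac{\lambda - p}{\lambda}a_-$, not $a_+$. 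This is exactly the ``main obstacle'' you flag at the end, but you never actually resolve it; you assert a decoupling and move on, and your closed-form ratios $r_+ = 1/(1-p)$, $r_- = 1/p$ are only the special-case $\lambda = p$ versions of the paper's four expressions $\frac{1}{1-p}$, $\frac{1-\lambda}{\lambda(1-p)}$ (for $\lambda \ge p$) and $\frac{\lambda}{(1-\lambda)p}$, $\frac{1}{p}$ (for $\lambda < p$).

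This matters because under your identification both ratios exceed $1$ for \emph{every} $p\in(0,1)$, which makes the theorem's ``only when $|C_q^+|=|C_q^-|$'' false as you have set it up, not true. Noticing this, you retreat to a different claim --- that a single group-wide amplification requires $r_+ = r_-$, hence $p=1/2$. But the theorem does not assert a uniform ratio; $\hat{A}^{\text{inter}}_{q,o,t}$ and $\hat{A}^{\text{DR.GRPO}}_{q,o,t}$ are per-response, and the paper simply requires both conditional ratios to exceed $1$ simultaneously. The correct mechanism is a squeeze over $p$: with $\lambda$ held constant, the condition in the $\lambda\ge p$ branch reduces to $\lambda < \frac{1}{2-p}$ and in the $\lambda<p$ branch to $\lambda > \frac{p}{1+p}$; since $\frac{1}{2-p} > \frac{1}{2} > \frac{p}{1+p}$ for all $p\in(0,1)$, making the inequality robust to arbitrary $p$ pinches $\lambda$ to $\frac12$ from both sides. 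That squeeze, driven by the $\lambda$-versus-$p$ mismatch you suppressed, is what produces the balanced-split conclusion --- not a coincidence condition $r_+ = r_-$.
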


Based on Theorem \ref{thm1}, we divide the responses into two equally sized groups. In this way, DR.GRPO can be expressed as a special case of this unified form when $\mu=0.5$.
\begin{align}
\hat{A}^{\text{DR.GRPO}}_{q,o,t}  &=  R_{o} - \text{mean}(\{R_{o^\prime }|o^\prime \in G_q\}) = \frac{1}{2} \hat{A}^{\text{inter}}_{q,o,t} + \frac{1}{2}\hat{A}^{\text{intra}}_{q,o,t}.
% \label{eq:grpoloss}
\end{align}

\subsection{Aligning with Training Target through Weighted Advantage}
According to Section \ref{sec:cae}, the selection between different trends of metrics only takes place in the inter-group advantage. By weighting different conditions within the inter-group advantage calculation, this enables fine-grained control over the trend of metrics with only tiny differences compared to DR.GRPO. For instance, by slightly reducing the weight of longer responses, \methodName{} can accomplish reasoning of high token efficiency through the RL process. Specifically, the inter-group advantage in the Eq. \ref{eq:canon} should be replaced with $\hat{A}^{\text{inter}}_{q,o,t, \alpha}$
where $\alpha$ is the weight of a specific group, and $\hat{A}^{\text{inter}}_{q,o,t, \alpha}$ is defined as:
\begin{align}
\hat{A}^{\text{inter}}_{q,o,t, \alpha}  &=\left\{
                \begin{array}{ll}
                  R_{o} - \alpha * \text{mean}(\{R_{o^\prime}|o^\prime \in G_q^+\}), \text{if} ~ o \in G_q^-\\
                  \\
                \alpha * R_{o} - \text{mean}(\{R_{o^\prime}|o^\prime \in G_q^-\}), \text{if} ~ o \in G_q^+\\
                \end{array}\right. .
\label{eq:weight}
\end{align}
For example, setting $\alpha$ as 0.9 can achieve substantial length reduction with little performance drop, where $C_q^+$ is considered the group with longer responses.
% !TEX root = ../iclr2025_conference.tex
\section{Experiments}
\label{sec:exp}
The empirical evaluation of \methodName{} consists of three parts. Firstly, we demonstrate the effect of intra-group and inter-group advantages, respectively, across six math reasoning benchmarks and one high-complexity logic reasoning benchmark. In the second part, we perform several scheduling tricks to get the frontier in both tasks. At last, by weighting the longer responses with $\alpha<1$, we achieve efficient reasoning that reaches a better Pareto frontier.
% \vspace{5pt}
\subsection{Performance of Intra-group and Inter-group Advantages.}
% \vspace{5pt}
\label{sec:exp1}
\textbf{Training Setup.} We select the response length and the per-token generation entropy, respectively, to regroup the sampled solutions. We use a subset with 45k prompts from OpenR1-Math-220k \citep{openr1} that is filtered and constructed by \citet{yan2025learning}. Following DR.GRPO \citep{liu2025understanding} and DAPO \citep{yu2025dapo}, we correct the response-level length bias and utilize the clip-higher strategy ($\epsilon_{high} = 0.28$) for all experiments. We also remove both the KL loss and the entropy loss. We sample 16 responses per prompt and use temperature=1.0 for rollout generation. Our rollout batch size is 512, and the train batch size is 32. The responses to the same prompt are separated into two evenly sized groups by sorting ordinal variables. We conduct the main experiments on Qwen2.5-Math-7B \citep{qwen2.5_math} following \cite{zeng2025simplerl,liu2025understanding,yan2025learning}. We expand Qwen2.5-Math-7B's context limit from 4096 to 16384 by changing the rope theta from 10000 to 40000\footnote{The original context limit leads to unacceptable length clipping ratio. Please see Figure \ref{fig:clip_length} in Appendix \ref{app:sch3}.}. We set the maximum answer length to 8192 and the learning rate is set to 1e-6. We use \textit{Math-Verify} to give the 0-1 score for both training reward and evaluation accuracy.

\textbf{Evaluation Setup.} We evaluate the math reasoning capabilities on six commonly used benchmarks, such as MATH-500 \citep{dataset_math}, GSM8K \citep{cobbe2021training}, AMC \citep{li2024numinamath}, OlympiadBench \citep{dataset_olympiad}, and AIME 24/25. Due to the tiny size of AIME 24/25 and AMC, we report \textit{Avg@10} as the test accuracy. For the other benchmarks, we compute the \textit{Pass@1} as the test performance. We calculate the average performance and token cost across all benchmarks. All models are evaluated under the same setting with a temperature of 0.6. The values in Table \ref{tab:main} are the percentage accuracy of the models evaluated. We also select three high-complexity subsets of ZebraLogic \citep{zebra} with their solution space sizes greater than $10^3$ (Mid), $10^6$ (Large), and $10^9$ (XLarge), respectively. In this experiment, we record six metrics, including training reward, generation entropy, response length, the test performance of math tasks and logic reasoning task, and the marginal improvement gained from reflection.

\begin{table*}[t]
\centering

\caption{Overall performance based on \textbf{Qwen2.5-Math-7B}.
We compare with the following baselines: (1) Qwen2.5-Math-7B-Instruct (Qwen-Instruct), (2) prior advantage estimation methods. 
All models are evaluated under a unified setting.  
% Out-of-distribution evaluation includes ARC-c, GPQA-diamond (GPQA$^{*}$), and MMLU-Pro.
\textbf{Bold} and \underline{underline} indicate the best and second-best results, respectively.}
\label{tab:merged_results}
\setlength{\tabcolsep}{2.5pt}  
\renewcommand{\arraystretch}{1.3} 
\resizebox{\textwidth}{!}{%
% \begin{tabular}{lcccccc|>{\columncolor{cyan!20}}c>{\columncolor{yellow!20}}c}
\begin{tabular}{lcccccc|c>{\columncolor{math!20}}c|ccc|c>{\columncolor{high!20}}c}
\toprule
\multirow{2}{*}{\textbf{Model}} & \multicolumn{8}{c}{\textbf{Math Reasoning}} & \multicolumn{5}{c}{\textbf{High Complexity Reasoning}}  \\
\cmidrule(lr){2-9} \cmidrule(lr){10-14}
 & $\textbf{AIME 24}$ & $\textbf{AIME 25}$ & $\textbf{Olympiad}$ & $\textbf{AMC}$  & $\textbf{MATH-500}$ & $\textbf{GSM8k}$ & \textbf{Tokens}  & \textbf{Acc} & $\textbf{Mid}$  & $\textbf{Large}$ & $\textbf{XLarge}$ & \textbf{Tokens}  & \textbf{Acc}\\
\midrule
% \multicolumn{14}{c}{Qwen-2.5-Math} \\
% \midrule
Base 
   & 16.0 &	8.0 &	26.4 &	41.6 &	61.2 &	61.6 &	2046 	&35.8 	&0.0 &	0.5 &	0.1 &	3303 &	0.2  \\
Instruct
   & 10.7 &	9.7 &	39.7 &	49.3 	&82.2 &	94.8 	&1077 &	47.7 &	11.6 &	6.2 &	3.5 &	2647 &	7.1     \\
% \normalrow
\rowcolor{gray!30}
% \multicolumn{9}{c}{Open-Source Zero-RL Models} \\
% % SimpleRL-Zero~             
%    % & 74.0  & 53.0 & 38.5 & 29.8   & 25.3      & 7.0 & 1596.0 & 37.9   \\
% SimpleRL-Zoo~             
%    & 79.2 & 56.5 & 43.1 &35.7  & 28.0       & 12.7 & 921.5 & 42.5    \\
% OpenReasoner-Zero~               
%   & 83.6 & 51.9 & 49.3 & 35.7  & 19.7     & 14.3 & 2372.4 & 42.4  \\
% % PRIME-RL-Zero~                    
%    % & 58.6 & 48.7 & 31.1 & 23.5 & 22.3     & 7.7 & 1696.6 & 32.0      \\
% PRIME-RL~                    
%    & 85.8  & 63.1 & 48.7 & 40.4  & 25.0       & 15.3 & 1755.8 & 46.4      \\
% Oat-Zero~                       
%    & 81.4 & 58.1   & 43.4 & 36.4 & \textbf{31.7} & 9.3 & \textbf{903.7} & 43.3     \\
% \rowcolor{gray!30}
\multicolumn{14}{c}{Previous Advantage Estimation} \\
ReMax    & 23.3 &	18.0 &	48.1 	&62.8 &	83.4 &	90.3 &	2418 &	54.3 &	37.2& 	21.0 &	9.7 &	6246 	&22.6 \\
R++   & 20.3 &	\underline{19.7} &	45.8 &	58.3 &	82.6 &	90.0 &	4107 &	52.8 &	33.8 &	11.9 &	3.3 &	9923 &	16.3  \\
% R++ w/ baseline  & 84.4 &	\underline{65.2}&  48.7 & 41.9 &  27.7 & 16.7& 1194.7 	& 47.4   \\
RLOO & 25.0 &	18.7 	&\underline{51.3} &	\textbf{64.3} 	&84.0 &	91.0 &	2537 &	55.7 &	33.9 &	14.4& 	5.8 &	10610 &	18.0   \\
GRPO & 22.3 &	18.3 	&47.3 &	60.6 &	83.8 &	90.8 &	3730 &	53.8 &	31.5 &	14.9 &	5.2 &	9406 &	17.2 \\
DR.GRPO ($\mu = 0.5$)  &\underline{27.7} &	\textbf{20.3} &	48.4 	&63.4 &	83.2 &	91.1 &	1522 &	\underline{55.7} &	39.2 &	24.4 &	15.1 &	4896 &	26.2  \\
\rowcolor{gray!30}
\multicolumn{14}{c}{Our Methods (Conditional Groups based on \textit{Length})} \\	 
\intra{}  & 21.7 &	19.0 &	49.9 &	63.0 &	86.2& 	\textbf{92.2} &	2176 &	55.3 &	\underline{41.8} &	25.6 &	14.7 &	4364 &	27.4 \\
\inter{}  & 27.3 &	19.3 &	47.6 &	64.2 	&82.6 	&91.0 &	\textbf{1008} 	&55.3 	&\textbf{42.7} &	\textbf{28.6} 	&17.1 &	\underline{3652} 	&\textbf{29.5}  \\
\rowcolor{gray!30}
\multicolumn{14}{c}{Our Methods (Conditional Groups based on \textit{Entropy})} \\
% \methodName{} ($\mu = 0.0$)  & 84.0 & 63.6 & 48.0 & 39.3  & 22.7 & 15.0 & 1600.9  & 45.4 \\
% \methodName{} ($\mu = 0.2$) & 91.5 & 85.8 & 63.1 & 48.7 & 40.4 & 25.0 & 15.3 & 1583.6 & 46.4 \\
% \methodName{} ($\mu = 0.4$)  & 84.8 & 63.6 & 50.5 & 39.7  & 27.3  & 17.7 & 1259.0  & 47.3 \\
% \methodName{} ($\mu = 0.6$)  & 86.4 & 64.8 & \textbf{52.0} & \textbf{43.8} & 27.0 & 17.3 & 1087.0  & \underline{48.6} \\
% \methodName{} ($\mu = 0.8$) & 91.7 & 85.8 & 64.0 & 49.2 & 40.1 & \underline{29.3} & 18.7 & 979.5  & 54.1 \\
\intra{} & 25.0 &	16.0 &	48.9 	&62.7 &	\underline{84.4} 	&91.1 &	2959 &	54.7 &	39.1 	&\underline{27.8} &	\textbf{20.3} &	\textbf{3101} 	&\underline{29.1}  \\
\inter{} & \textbf{32.7} & 	18.7 & 	\textbf{51.7} & 	\underline{64.2} & 	\textbf{87.0} & 	\underline{91.1} & 	\underline{1466} & 	\textbf{57.6} 	& 36.3 & 	25.8 & 	14.9 & 	4415 & 	25.7  \\
\bottomrule
% \vspace{-35pt}
\end{tabular}%
\label{tab:main}
}
    \vspace{-15pt}

\end{table*}

\textbf{Baselines.} In this subsection, we fix $\alpha=1.0$ in Eq. \ref{eq:weight} and present the results of $\mu=0.0$ (\intra{}) and $\mu=1.0$ (\inter{}) in Eq. \ref{eq:canon}. A more detailed scheduling on $\mu$ will be conducted in Section \ref{sec:exp2}, and the adjustment of $\alpha$ will be covered in Section \ref{sec:exp3}. We compare \methodName{} with two types of baselines: 
(1) \textbf{Qwen2.5-Math-7B-Instruct} (Instruct, \cite{qwen2.5_math}), and (2) \textbf{previous advantage estimation methods}, such as ReMax, REINFORCE++ (R++), RLOO, GRPO, and DR.GRPO.

\textbf{Inter-group advantage achieves higher accuracy and lower length in math tasks.} The experimental results are shown in Table \ref{tab:main}. \inter{} based on \textit{Entropy} achieves an average performance of 57.6 among six math benchmarks, which is 1.9 points higher than the DR.GRPO (55.7). Specifically, \inter{} based on \textit{Entropy} has the best performance on four of the six benchmarks, and is highly competitive with the top-performing models on the rest. In AIME24, the model's performance is 5.0 points higher than the DR.GRPO's. Meanwhile, \inter{} based on \textit{Length} reduces the token cost by 33.8\% compared with DR.GRPO, while maintaining nearly unchanged performance (55.7 vs. 55.3).

\textbf{The benefit of intra-group advantage grows as the logic reasoning task's complexity increases.} Table \ref{tab:main} demonstrates that \intra{} based on \textit{Entropy} achieves higher performance of 2.9 points and 36.6\% shorter length compared with DR.GRPO. Its performance edge over DR.GRPO increases (from -0.1 to 3.4 and then 5.2) when the complexity becomes higher. The results of \intra{} based on \textit{Length} shows another trend, whose inter-group advantage makes the best performance in this task. 

\textbf{Training dynamics reflect different roles of CANON-Intra and CANON-Inter.} To be specific, we record training curves under the setting of \methodName{} based on \textit{Entropy}. The training dynamic shown in Figure \ref{fig:metrics} indicates that both the training reward and the test performance of the math tasks increase rapidly when only \inter{} is utilized ($\mu=1.0$). Its generation entropy stably decreases, and the response length changes smoothly. When using only \intra{} ($\mu=0.0$), the responses show a greater tendency for exploration. We divide the responses into two groups by counting reflection patterns and calculate the gap in average reward between the group with more and fewer reflections (Figure 2f). Figure \ref{fig:metrics} demonstrates that the trend of high-complexity reasoning performance is highly consistent with the curve of reflection gains. In the later stages of training (after approximately 90 steps), the reflection gain curve of intra-group advantage increases and finally crosses the zero point. At the same time, its performance experiences rapid growth, significantly outperforming the other two advantages.

\begin{figure}[t]
    \centering
    \includegraphics[width=0.95\linewidth]{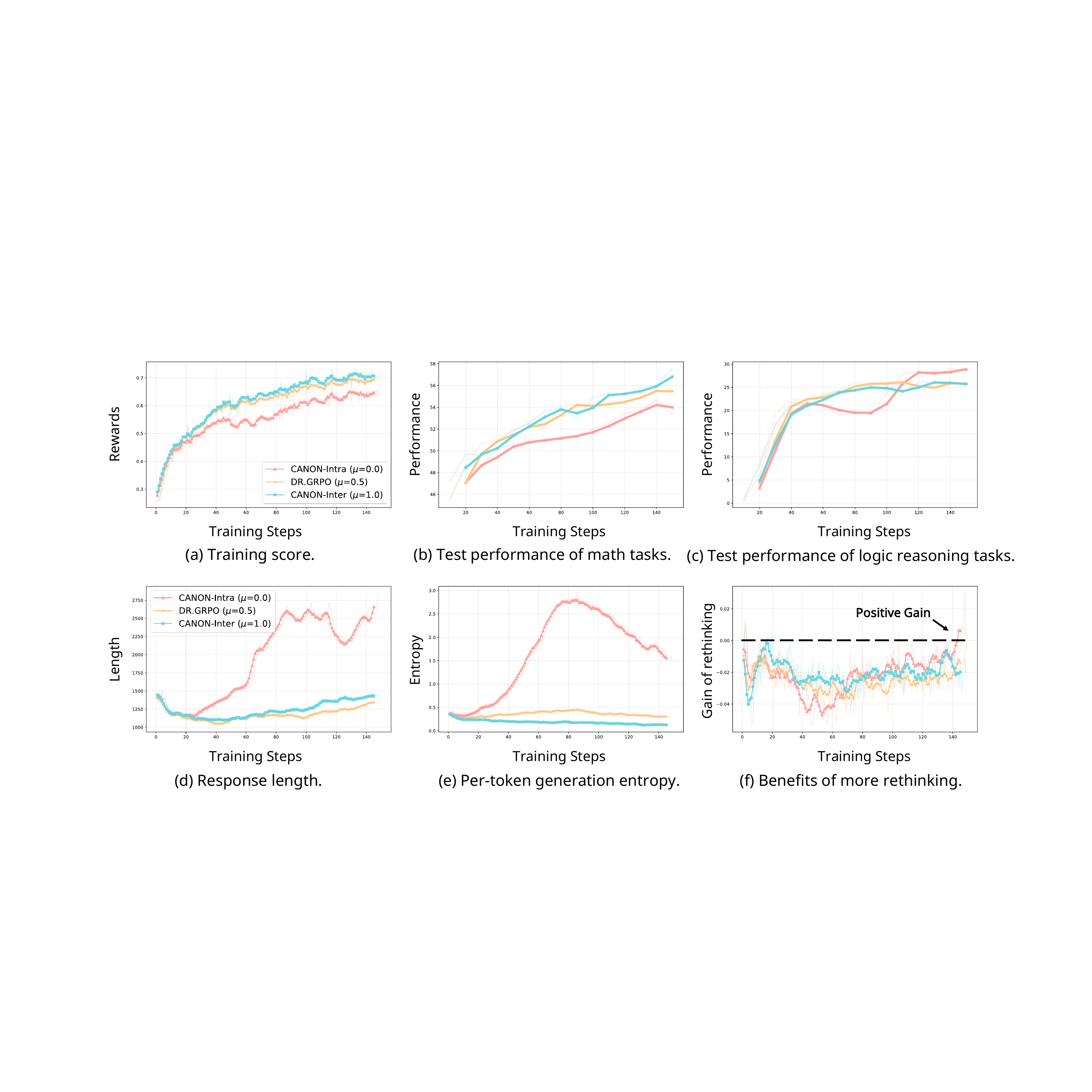}
    \caption{The training dynamics and average test performance of \inter{}, DR.GRPO, and \intra{}.}
    \label{fig:metrics}
\vspace{-20pt}
    
\end{figure}

% \vspace{5pt}

% \subsection{Achieve Better Performance Balance through Advantages Scheduling}
% \vspace{5pt}
\subsection{Balancing Performance via Advantage Scheduling}
% \vspace{5pt}

\label{sec:exp2}
As shown in Table \ref{tab:main} and Figure \ref{fig:metrics}, \inter{} and \intra{} outperform DR.GRPO on the math reasoning task and the complex logic reasoning task, respectively, but neither can achieve the best performance on both simultaneously. To this end, we schedule the \inter{} and \intra{} by leveraging accuracy and the training steps to achieve a better balance between the two scenarios. 

\textbf{Setup.}
% In this experiment, we only consider the metric of entropy, which exhibits a clear trend as shown in Section \ref{sec:exp1}.
We conduct experiments across six math benchmarks and three complex logic reasoning tasks on Qwen2.5-Math-7B \citep{qwen2.5_math}, Llama3.1-8B \citep{dubey2024llama}, and Qwen2.5-Math-1.5B \citep{qwen2.5_math}.  
% derived from one of the tried scheduling strategies that demonstrates strong performance across both scenarios for each model. 
For the two Qwen series models, we use the dataset introduced in Section \ref{sec:exp1}. Due to the weak capability of Llama3.1-8B, we collect a simpler dataset with 35k samples from four open-source datasets and follow the other training setups described in Section \ref{sec:exp1}. Please see the details of this newly constructed dataset in Appendix \ref{app:sch5}. We draw a radar chart with the average performance of the two scenarios for visualization, and the results for \methodName{} with scheduling are denoted as \dynamic{}.

\textbf{Scheduling strategies.} All of the strategies are based on the coefficient $\mu$ in the Eq. \ref{eq:canon}, which balances the \inter{} and \intra{}. We try four scheduling strategies utilizing the training accuracy and training steps, respectively: (1) \textit{First-Inter-Later-Intra}. We set the value of $\mu$ to $1 - \Lambda$, where $\Lambda$ denotes the mean accuracy of current whole batch; (2) \textit{First-Intra-Later-Inter}. We set the value of $\mu$ to $\Lambda$. (3) \textit{Cosin-First-Inter-Later-Intra}. We schedule the value of $\mu$ from high to low using a cosine annealing function with restarts and warm-up. (4) \textit{Cosin-First-Intra-Later-Inter}. We schedule the value of $\mu$ from low to high using a cosine annealing function with restarts and warm-up. Please see Appendix \ref{app:sch6} for more details. The shown results of \dynamic{} are derived from one of the tried scheduling strategies that achieve strong performance in both scenarios. Ultimately, based on training performance, we select strategy \textit{Cosin-First-Inter-Later-Intra} for Qwen2.5-Math-7B and Llama3.1-8B, and strategy \textit{First-Inter-Later-Intra} for Qwen2.5-Math-1.5B.

\begin{wrapfigure}{r}{0.45\textwidth}
  \centering
  \vspace{-15pt} 
  \includegraphics[width=\linewidth]{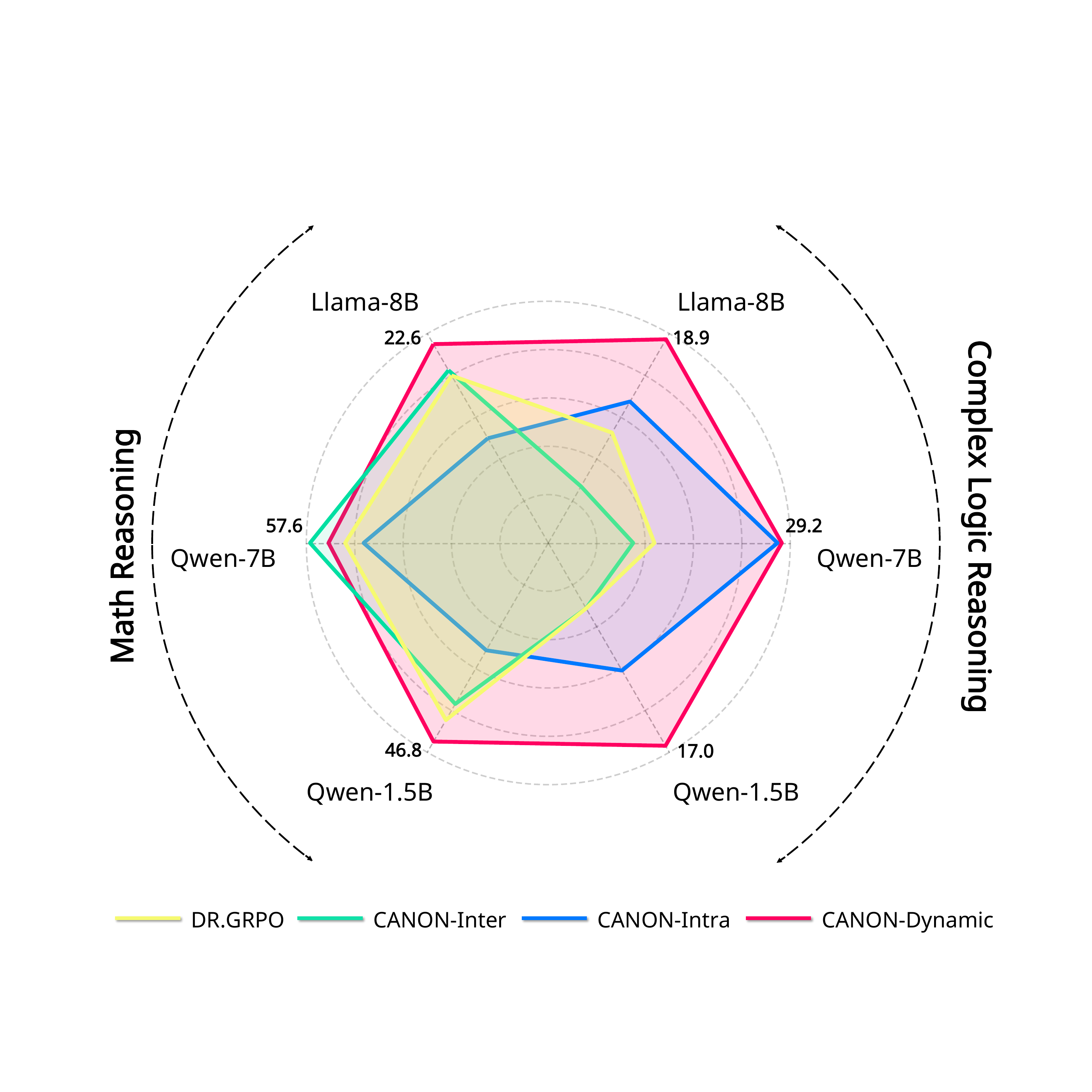}
  \vspace{-10pt}
  \caption{\footnotesize Evaluation for three LLMs across two types of reasoning tasks. \dynamic{} with scheduled advantages significantly outperforms DR.GRPO in almost all the evaluations.}
  \label{new_exp2_2}%文中引用该图片代号
  \vspace{-20pt}
\end{wrapfigure}

\textbf{\methodName{} achieves the best performance across almost all of the tested LLMs and tasks.} As shown in Figure \ref{new_exp2_2}, \dynamic{} outperforms DR.GRPO across all models and tasks, achieving a superior and more comprehensive performance. Although its math performance on Qwen2.5-Math-7B lags slightly behind \inter{}, it still makes a better performance than DR.GRPO. The radar chart illustrates the trade-off between two types of tasks faced by \inter{} and \intra{} between two types of tasks, as well as the balanced but mediocre performance of DR.GRPO. \inter{} tends to favor mathematical tasks, and \intra{} demonstrates strong performance in complex logical reasoning tasks. This once again highlights the effectiveness of \dynamic{}.

\vspace{15pt}
\subsection{Weighted Conditions for Efficient Reasoning.} \label{sec:exp3}
% \vspace{5pt}

\textbf{Training Setup.} In this subsection, we utilize \methodName{} based on response length with $\mu=0.5$ in the Eq. \ref{eq:canon} and tune the $\alpha$ in the Eq. \ref{eq:weight}, where $C_q^+$ is considered the group with longer responses. A larger $\alpha$ means less compression of length. We follow the training setups described in Section \ref{sec:exp1} and reduce the maximum response length to 3072 for better efficiency. To be specific, we use \eff{} to denote the results of \methodName{} with weighted conditions of length.
% \vspace{5pt}
%  

\textbf{Evaluation Setup.} To systematically assess LRMs’ reasoning efficiency~\citep{qu2025surveyefficientreasoninglarge,lab2025safework}, we introduce two types of curves: \textbf{budget-performance curves for each LRM} and \textbf{cost-performance curves of different coefficients for all compared baselines}. Specifically, we set a maximum budget for each benchmark based on its difficulty and the average unconstrained output length of LRMs (Appendix \ref{app:max_budget}), then slice the same response at various budget ratios to draw the budget-performance curves. Moreover, we tune the length-controlling coefficients of each baseline to draw the cost-performance curves, recording their average performance and token cost to enable a comprehensive and fair comparison. In every comparison, the closer to the upper-left corner, the better (which represents high accuracy and high efficiency at the same time).
% \vspace{10pt}
\begin{table*}[t]
\centering
\caption{The comparison between different methods towards efficient reasoning. Bold and \underline{underline} indicate the best and second-best results, respectively.}
\setlength{\tabcolsep}{2.5pt}
\renewcommand{\arraystretch}{1.3}
\resizebox{\textwidth}{!}{%
\begin{tabular}{lcccccccccccc|>{\columncolor{math!20}}c>{\columncolor{high!20}}c}
\toprule
 & \multicolumn{2}{c}{AIME 24} & \multicolumn{2}{c}{AIME 25} & \multicolumn{2}{c}{Olympiad} & \multicolumn{2}{c}{AMC} & \multicolumn{2}{c}{MATH-500} & \multicolumn{2}{c}{GSM8k} & \multicolumn{2}{c}{Overall} \\
 \cmidrule(lr){2-3} \cmidrule(lr){4-5} \cmidrule(lr){6-7} \cmidrule(lr){8-9} \cmidrule(lr){10-11}\cmidrule(lr){12-13} \cmidrule(lr){14-15} 
 & Acc & Tokens & Acc & Tokens & Acc & Tokens & Acc & Tokens & Acc & Tokens & Acc & Tokens & Acc & Tokens \\
\midrule
DR.GRPO &	29.0& 	1640 	&19.0 &	1586  &	49.0 &	1172 &	64.6 &	1214	&	85.8 	&728 & 91.9 	&349 & 56.6 & 1115
\\

\midrule
Clip Length  	 &	28.0 &	1177& 	\underline{18.3} &	1177  &	\underline{47.3} &	915  &\textbf{63.1} &	956  &	\underline{84.8} 	&612& \textbf{92.9}& 	291	&55.7 &	855
 \\

Length Reward$_{+}$ &	\textbf{31.7 }	&1190& 	18.0 &	1208 &	46.7 	&864  & 	61.8 &	937  &	84.6 	&546 & 91.9 &	255  &	\underline{56.2} &	869
 \\
% \midrule
Length Reward$_{*}$ &27.3 &	\underline{1087}	&13.7 &	\underline{1027}   &	46.4 &	\underline{707}  &	61.0 &	\underline{779} &	83.0 &	\underline{463} & \underline{92.2} &	\underline{198}	&	53.9 & \underline{710}
\\
\midrule
\eff{} ($\alpha=0.88$)		  &	27.3 	&\textbf{816} &	15.3 &	\textbf{862} &	43.9 &	\textbf{582} &59.3 &	\textbf{649} &84.4 	&\textbf{386}  &91.4 &	\textbf{166}  &	53.6 & \textbf{577}
   \\
\eff{} ($\alpha=0.96$) 	&\underline{29.7} &	1216 	&\textbf{19.0} &	1136  &	\textbf{48.4} &	881 &\underline{62.3}& 	936	&	\textbf{85.8} &	533  & 92.0 &	233 &	\textbf{56.2} & 822
 \\
\bottomrule
\end{tabular}%
}
\label{tab:main_transposed}
\vspace{-10pt}
% \vspace{-10pt}
\end{table*}

\begin{figure}[t]
    \centering
    % --- 左上图 ---
    \begin{subfigure}[b]{0.33\linewidth}
        \centering
        \includegraphics[width=\linewidth]{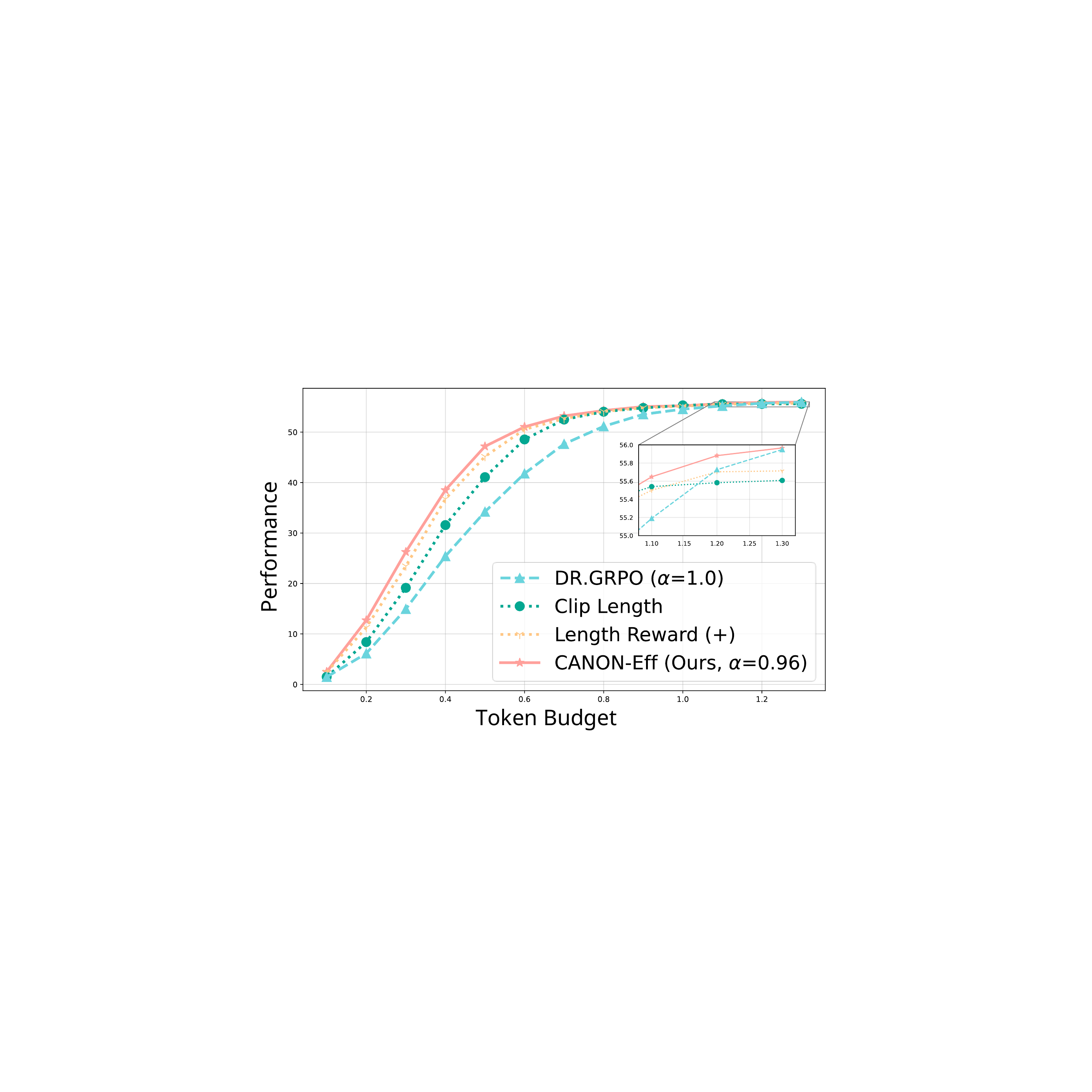}
        \caption{\eff{} with $\alpha=0.96$ consistently outperforms baselines methods.}
        \label{fig:left}
    \end{subfigure}
    \hfill
    % --- 右上图 ---
    \begin{subfigure}[b]{0.33\linewidth}
        \centering
        \includegraphics[width=\linewidth]{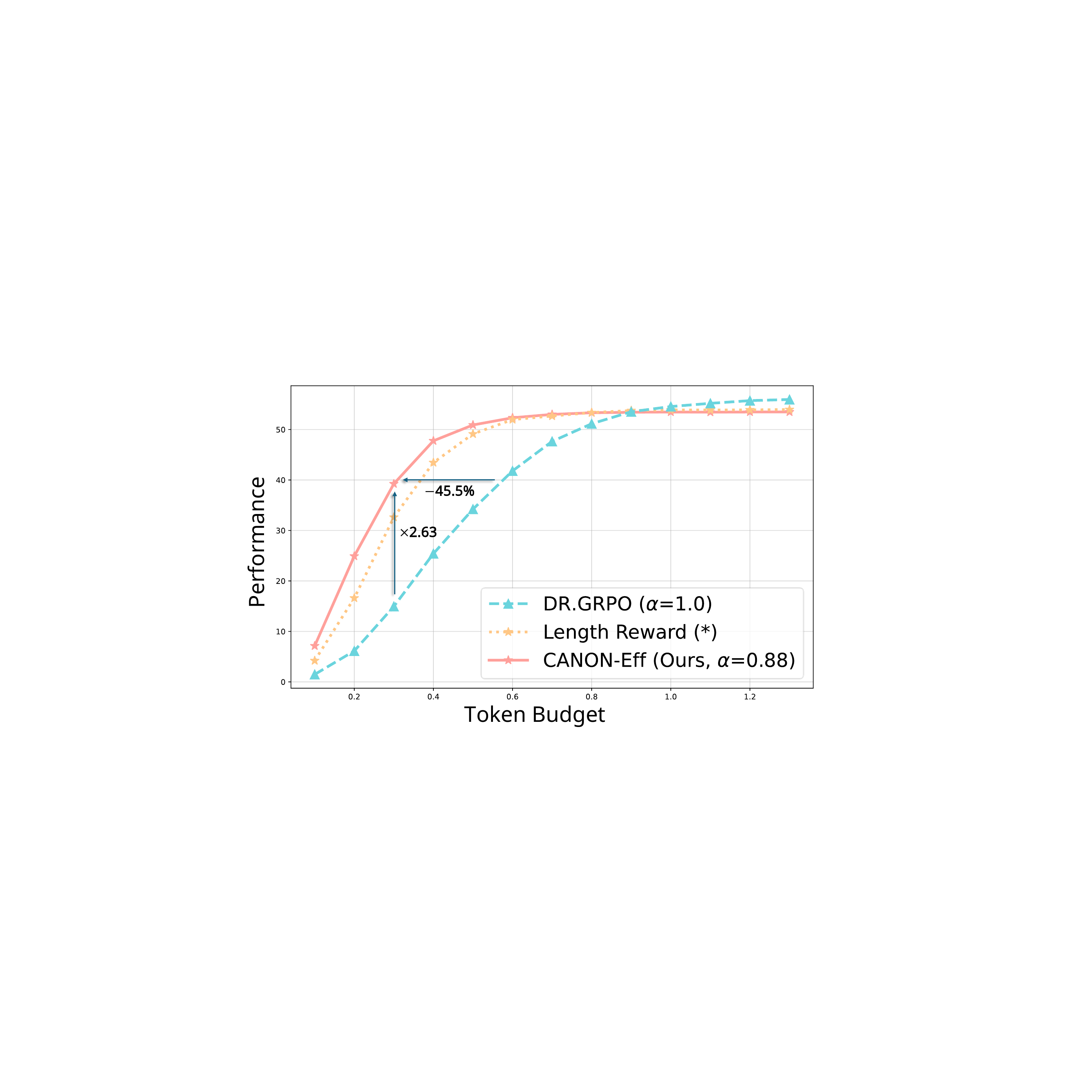}
        \caption{\eff{} with $\alpha=0.88$ achieves significantly better performance at low token budgets.}
        \label{fig:right}
    \end{subfigure}
    \hfill
    \begin{subfigure}[b]{0.3\linewidth}
        \centering
        \includegraphics[width=0.9\linewidth]{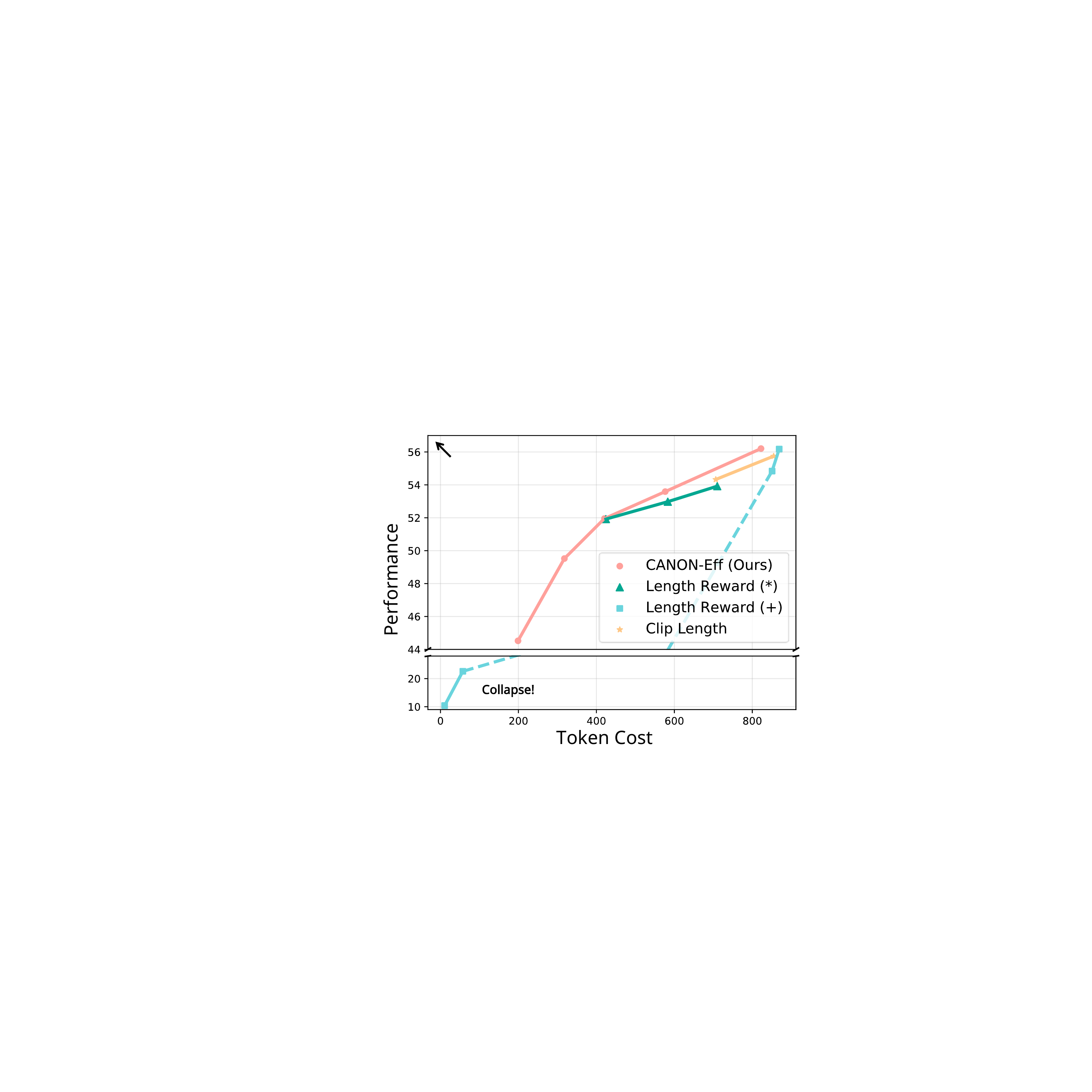}
        \caption{The Pareto frontier in the trade-off between performance and token efficiency.}
		\label{new_exp3}
    \end{subfigure}

    \caption{Budget-Performance and Cost-Performance Curves for Efficient Reasoning. This figure compares the reasoning efficiency of \eff{} against baselines under various token budgets. }
    \label{fig:combo}
    \vspace{-20pt}
\end{figure}

\textbf{Baselines.} We select three types of baseline methods towards efficient reasoning: (1) {Clip Length} that directly clips the maximum output length \citep{tp}, (2) {Length Reward ($+$)} that adds length penalties terms in the training reward (\cite{o1}, $+\text{coeff}*(\frac{\text{mean}_{G_q}(L)}{L}-1)$), and (3) {Length Reward ($*$)} that multiplies a normalized length coefficient on the reward (\cite{cmu1}, $* (1-\text{coeff} *\text{sigmoid}(\frac{L-\text{mean}_{G_q}(L)}{\text{std}_{G_q}(L)}))$). All these baselines are conducted with DR.GRPO. 
% \vspace{5pt}

\textbf{\methodName{} achieves better performance with shorter responses compared with baselines.} We present the detailed performance of the top-performing models for each method across various benchmarks in Table \ref{tab:main_transposed}. \eff{} with $\alpha=0.96$ Pareto dominates the results of Clip Length and Length Reward ($+$), reducing the length by 26.3\% compared to DR.GRPO while only decreasing performance by 0.4 points. Figure \ref{fig:combo} shows that \eff{} with $\alpha=0.96$ consistently outperforms the baseline methods in both low-token-budget and high-token-budget scenarios. Since models trained with the {Length Reward ($*$)} exhibit significantly lower length with low performance at the same time, it is difficult to fairly compare with other baselines. To this end, we include an additional model trained with \eff{} with $\alpha=0.88$ that has comparable performance. \ref{fig:right} indicates that \methodName{} with $\alpha=0.88$ shows better token efficiency compared with {Length Reward ($*$)}, achieving 2.63 times the performance of DR.GRPO in low-token-budget scenarios, while reducing token consumption by 45.5\% at the same performance level.
% \vspace{5pt}

\textbf{\methodName{} achieves a better Pareto frontier and stably explores the entire frontier.}
To draw the cost-performance curves for each method, we draw the Pareto frontier of \eff{} with the results of $\alpha=0.5,0.7,0.8,0.88, 0.96$. For Length Clipping, we respectively present the results with maximum lengths of 2048 and 1024 in the Pareto frontier. For Length Reward ($+$), penalty coefficients of 0.001, 0.004, 0.005, and 0.1 are used, respectively. For Length Reward ($*$), we utilize the coefficients of 0.05, 0.2, and 0.4. \ref{new_exp3} shows that all the frontier from baselines are dominated by the frontier of \eff{}'s. It is noteworthy that after the coefficient of Length Reward ($+$) is adjusted from 0.004 to 0.005, its performance drops from 54.8 to 22.5. In contrast, \eff{} remains consistently stable, exploring the Pareto frontier efficiently.

 % Since models trained with the length reward ($*$) are difficult to compare with other optimal models, we also  Furthermore, a budget-performance plot is used to further compare their reasoning efficiency for all these results. 

% \paragraph{Performance and efficiency of \methodName{} both achieve optimal results.}
% \begin{wrapfigure}{r}{0.5\textwidth}
%   \centering
%   \vspace{-20pt}
%   \includegraphics[width=\linewidth]{exp3_2.pdf}
%   \vspace{-20pt}
%   \caption{Performance-Efficiency Pareto Frontier.}
%   \vspace{-10pt}
%   \label{fig:pareto}
%   % \vspace{pt}
% \end{wrapfigure}
 % As shown in Table \ref{tab:main_transposed}, \methodName{} with  Similarly, \methodName{} with $\alpha=0.88$ also reduces the length by 18.7\% compared to the results of Length Reward ($*$) with only a performance decrease of 0.3 points.  Figure \ref{fig:right} indicates that \methodName{} with $\alpha=0.88$ achieves 2.63 times the performance of DR.GRPO in low-token-budget scenarios, while reducing token consumption by 45.5\% at the same performance level.

% \textbf{\methodName{} achieves the Pareto optimal frontier.} Figure \ref{new_exp3} shows the Pareto frontier formed by \methodName{} with different $\alpha$.  

\vspace{-5pt}

\section{Analysis}
\vspace{-5pt}
\begin{wrapfigure}{r}{0.4\textwidth}
\vspace{-5pt} 
  \centering
  \captionof{table}{\small The accuracy and token cost of \inter{} with different metrics.}
  \vspace{-10pt} 
    \setlength{\tabcolsep}{2.5pt}  
    \setlength{\arrayrulewidth}{0.2pt}
    \renewcommand{\arraystretch}{1.2} 
    {\fontsize{2pt}{2.2pt}\selectfont
  \resizebox{0.38\textwidth}{!}{%
  % \small
  \begin{tabular}{lcc}
    \hline
    Methods & \textit{Acc} & \textit{Tokens} \\
    \hline
    DR.GRPO &55.7 &	 1522 \\
    \hline
    Random  regrouping & 55.7 &	 1557 \\
    % \hline
    \rowcolor{gray!30}
    \multicolumn{3}{c}{\inter{}} \\
    % based on \textit{Reflections} & 56.6 &	21.0 \\
    {based on \textit{Length}}  & 55.3 &	\textbf{1008} \\
    {based on \textit{Entropy}}  & \textbf{57.6} & 1466 \\
    \hline
\end{tabular}
}}
\label{ablation}
\vspace{-10pt} 
\end{wrapfigure} 
In this section, we analyze how \dynamic{} and \eff{} effectively improve the task performance and reasoning efficiency.

\begin{wrapfigure}{r}{0.4\textwidth}
  \centering
  \vspace{-5pt} 
  \includegraphics[width=0.38\textwidth]{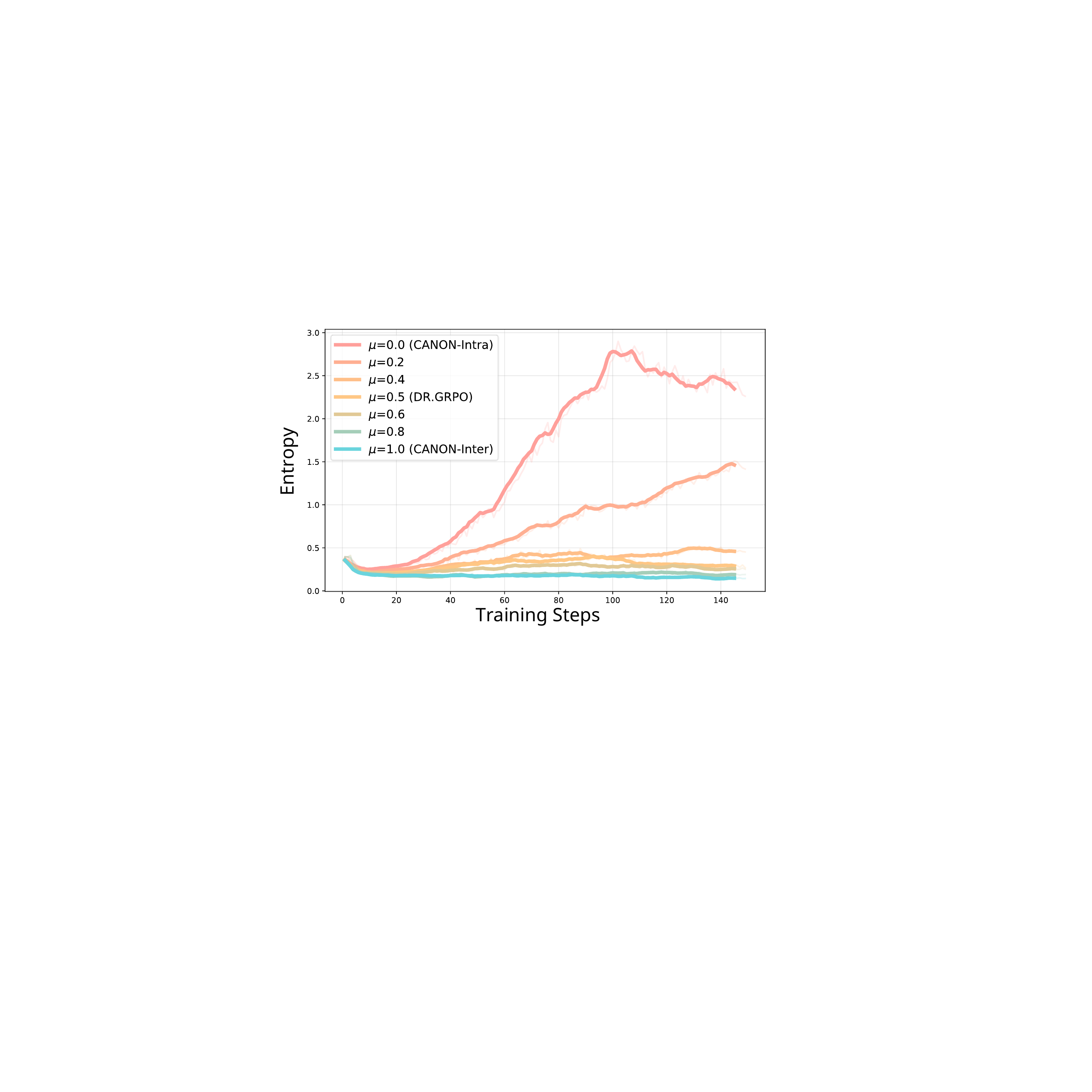} 
  \vspace{-10pt} 
  \captionof{figure}{\small\methodName{} shows hierarchical trends of target metrics through different combinations of \inter{} and \intra{}.} 
  \label{fig:ana1}
  \vspace{-10pt}
\end{wrapfigure}

\textbf{\methodName{} selects appropriate metrics as the target.} We conduct a simple ablation study on the target metrics considered by \methodName{}. As shown in Table \ref{ablation}, random regrouping achieves only the same performance as the baseline method while producing longer responses, thus failing to improve either performance or efficiency compared to the baseline. In contrast, \inter{} based on the response length excels in the token efficiency with 33.8\% shorter responses, and the entropy-based \inter{} delivers the best performance (57.6 points) among the comparisons.

\textbf{Different advantage combinations of \methodName{} select different trends of the target metrics.} Due to the different baseline rewards being compared, \inter{} tends to favor correct answers from the group with a higher average reward, while \intra{} selects correct answers from the group with a lower average reward. We compare the effects of \methodName{} on their target metrics across seven different settings, with $\mu$ ranging from 0.0 to 1.0. When entropy is considered, figure \ref{fig:ana1} shows that a larger $\mu$ (favoring more \inter{}) leads to a reduction in entropy, whereas a smaller $\mu$ (favoring more \intra{}) promotes an increase in entropy. The results demonstrates a hierarchical trend in the metric changes, indicating the effectiveness of controlling and selecting different trends from \inter{} and \intra{}. In this way, \dynamic{} can boost the task performance by adjusting different combinations of the two components.
\begin{wrapfigure}{r}{0.4\textwidth}
  \centering
  \vspace{-10pt} 
  \includegraphics[width=0.38\textwidth]{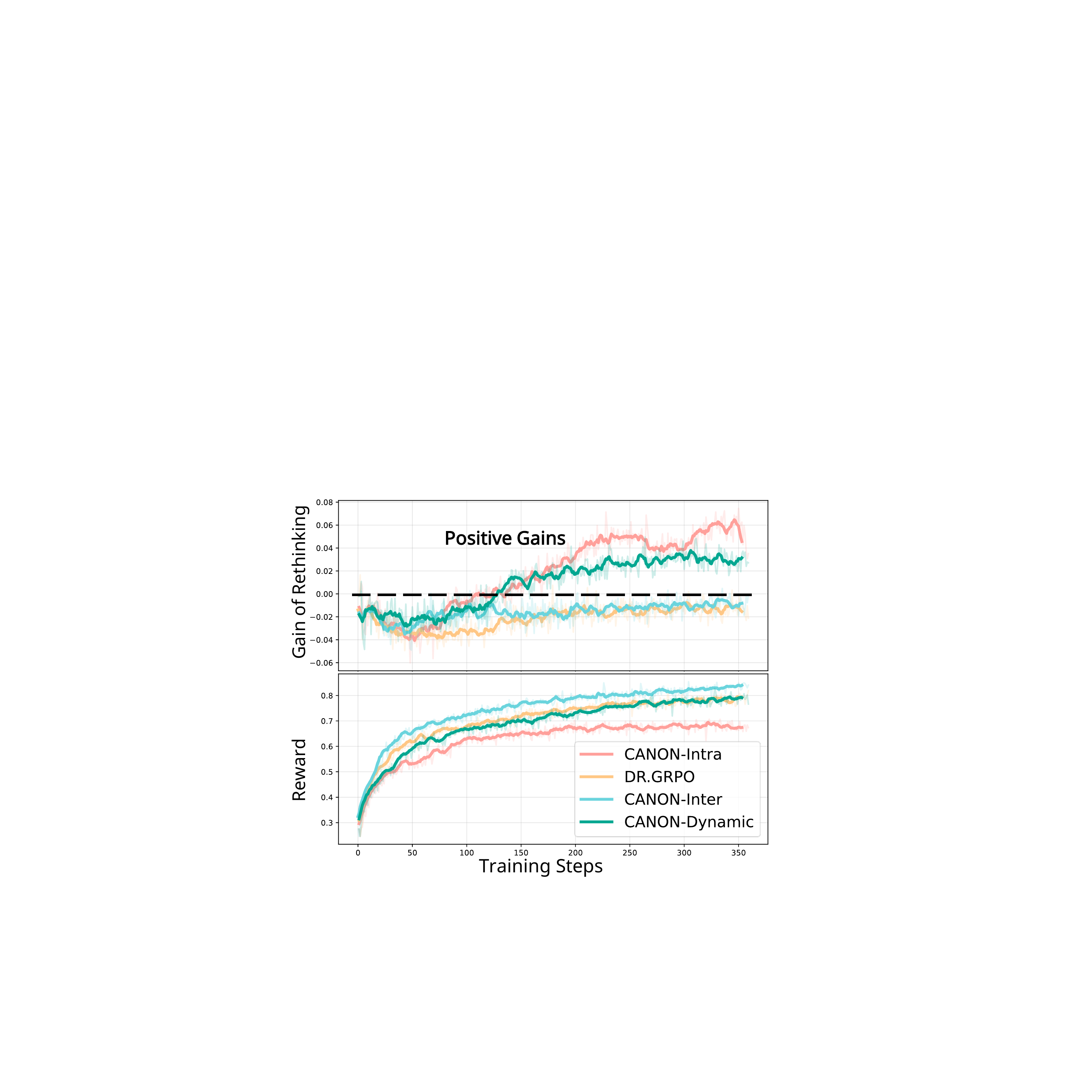} 
  \vspace{-10pt}
  \captionof{figure}{\small\dynamic{} with scheduled $\mu$ has positive gains of rethinking and high training score at the same time.} 
  \label{fig:ana3}
  \vspace{-15pt}
\end{wrapfigure}

\textbf{\methodName{} can achieve positive gains of more rethinking and high training efficiency through scheduling of two advantages.} As shown in Figure \ref{fig:ana3}, we record the performance genuinely brought by reflections and the curve of training reward. Although \intra{} achieves positive gains from more reflections, its training reward experiences a significant decline. In contrast, \inter{}, which shows a similar trend of DR.GRPO, has not yet achieved positive returns even by step 360, but maintains a higher training reward. \dynamic{}, on the other hand, not only achieves positive gains of rethinking but also makes a training reward on a par with \inter{}'s. This explains why \dynamic{} can achieve comprehensive leading performance in both math and complex logic reasoning tasks.

\vspace{5pt}
\section{Conclusion}

In this paper, we introduce \methodName{}, a novel reinforcement learning framework for large reasoning models that leverages human priors on training metrics (e.g., entropy, response length) without presuming their directional impact on performance. Extensive experiments across six math reasoning benchmarks and three high-complexity logic reasoning tasks demonstrate that CANON significantly outperforms prior advantage estimation methods like DR.GRPO. \methodName{} also supports flexible weighting of different metric trends, where \methodName{} based on
response length achieves a superior Pareto frontier in the performance-efficiency trade-off. Our analysis further confirms that \methodName{} promotes beneficial behaviors such as effective exploration and reflection, which are critical for solving complex reasoning problems.
% In this paper, we introduce \methodName{} to clarify how GRPO utilizes these variables and balances the exploitation and exploration based on the common patterns of correct answers. \methodName{} demystifies this process by decomposing the advantage signal into an exploitative inter-group component and an exploratory intra-group component. This decomposition proves highly effective: the inter-group advantage secures SOTA in-domain performance on math benchmarks , while the intra-group component is vital for superior generalization and enabling effective reflection. Moreover, \methodName{} enables fine-grained control over the training process. This allows us to significantly improve token efficiency and establish a better Pareto frontier in the performance-efficiency trade-off, surpassing existing methods.

\newpage
\section*{Ethics statement}
This work aims to introduce human priors about key metrics into reinforcement learning by proposing a novel advantage estimation framework named \methodName{}, which amplifies the impact of target metrics without presuming preferences. The experiments in this paper are limited to reasoning tasks conducted on open-source models, datasets, and benchmarks, which will not raise ethical concerns. We hope to explore the potential of \methodName{} to enhance the security of large language models in the future, thereby promoting their reliable and trustworthy development.

\section*{Reproducibility statement}
We aim to include both the high-level and low-level details of our method in the setup paragraphs of Section \ref{sec:exp} and Appendix \ref{details} to reproduce our results. All experiments are conducted on open-source LLMs and benchmarks. We employ open-source datasets for the Qwen series LLMs, provide a detailed description of the prompts used for training and evaluation, and comprehensively present the construction process of the training dataset for the Llama series LLM. Our code implementation is based on VeRL \citep{sheng2024hybridflow}, which is applied with focused modifications in the advantage computation part, enhancing the reproducibility of our work. Please access our code base via the following link: \href{https://github.com/biuboomc/CANON}{\methodName{}}.

\bibliography{iclr2026_conference}
\bibliographystyle{iclr2026_conference}

\appendix
\newpage
\section{Limitations.}
Based on feasibility and motivation, this work focuses on conditions that can be specified through numerical ordering, without exploring conditions that are more complex and harder to verify. Due to limitations in paper length and computation resources, this work primarily conducts the \methodName{} based on two metrics—response length and entropy—while other training metrics remain unexplored. Additionally, the paper considers only one metric at a time, without attempting to incorporate multiple metrics simultaneously. This demonstrates that the perspective and framework proposed in this work is flexible and hold significant potential for extension, which can be further explored in future research.

\section{The Use of Large Language Models.}
LLMs primarily assist this work in two aspects: on one hand, they are used for aiding our writing, and on the other hand, they sometimes serve as a coding assistant during the programming of our code base.

\section{Experiments Details.}
\label{details}
\subsection{Rethinking Patterns.} \label{app:patterns}
Following \cite{gandhi2025cognitive}, we firstly samples 10000 responses of Qwen3-32B \cite{yang2025qwen3} and utilize the modified prompts from \citep{gandhi2025cognitive} to collect the rethinking patterns of verification, sub-goal setting, and backtracking. Then we match these patterns in a few Question-Answer instances and filter out overly frequent conjunctions, overly short words, and semantically ambiguous phrases. The number of remaining keywords and regular expressions is 334 for verification,  1036 for sub-goal setting, and 532 for backtracking.
% \footnote{\url{https://github.com/biuboomc/Reasoning_Patterns}}

\subsection{The Maximum Token Budget Setups.} \label{app:max_budget}
We set the maximum token budget for each benchmark based on its difficulty and the average token length observed from models trained with DR.GRPO, as shown in Figure \ref{budget}. When plotting the performance-budget curve, we normalize the maximum token budget of each benchmark to 1.0. We then evaluate the performance of all benchmarks under token budgets ranging from 0.1× to 1.3× their respective maximum budget, averaging the results across benchmarks at each budget ratio and displaying them in the figure.
\begin{table}[h]
\centering
\caption{Benchmark-wise Maximum Token Budget.}
\begin{tabular}{lcc}
\toprule
\textbf{Benchmark} & \textbf{Avg.~Tokens (unlimited)} & \textbf{Max Token Budget} \\
\midrule
GSM8k            & 349  & 600  \\
MATH-500         & 728  & 1500 \\
AMC              & 1214 & 1800 \\
OlympiadBench    & 1172 & 1800 \\
% Minerva          & 765  & 1500 \\
AIME 2024        & 1640 & 2000 \\
AIME 2025        & 1586 & 2000 \\
\bottomrule
\end{tabular}
\label{budget}
\end{table}

\subsection{Reasons for expanding the context window of models from Qwen2.5-Math series.}
\label{app:sch3}
Initially, we uses the setting of Section \ref{sec:exp1}; however, during the training process, too much length clipping (> 30\%) results in nearly incomparable experimental outcomes, as shown in Figure \ref{fig:clip_length}. Therefore, we expand Qwen2.5-Math-7B's context limit from 4096 to 16384 and set the maximum output length to 8192, which alleviates this phenomenon.

\begin{figure}[htbp]
	\centering
	\begin{minipage}{0.33\linewidth}
		\centering
		\includegraphics[width=0.9\linewidth]{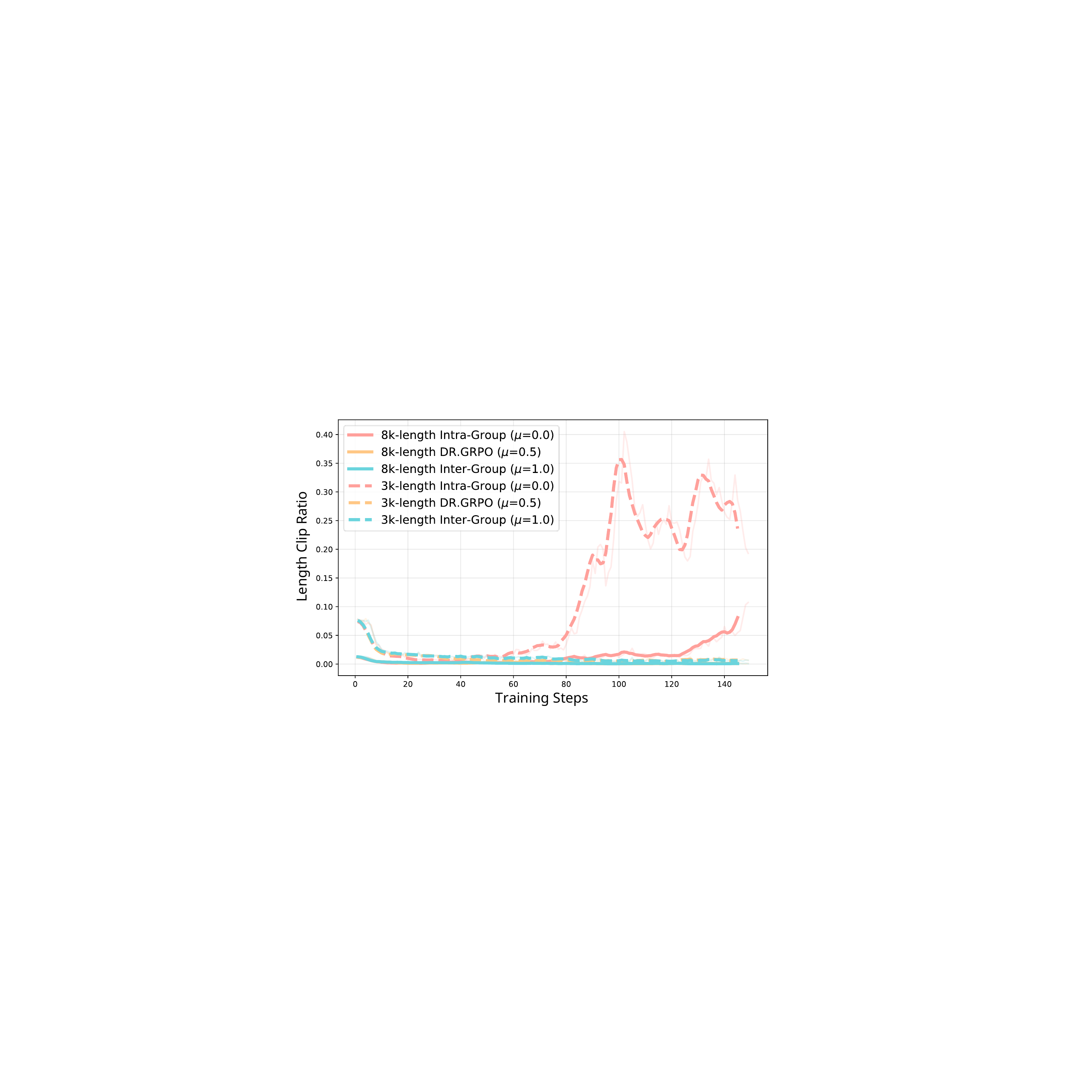}
		\caption{The ratio of answers truncated due to reaching the maximum output length.}
		\label{fig:clip_length}%文中引用该图片代号
	\end{minipage}
	%\qquad
    \hfill
	\begin{minipage}{0.64\linewidth}
		\centering
		\includegraphics[width=0.9\linewidth]{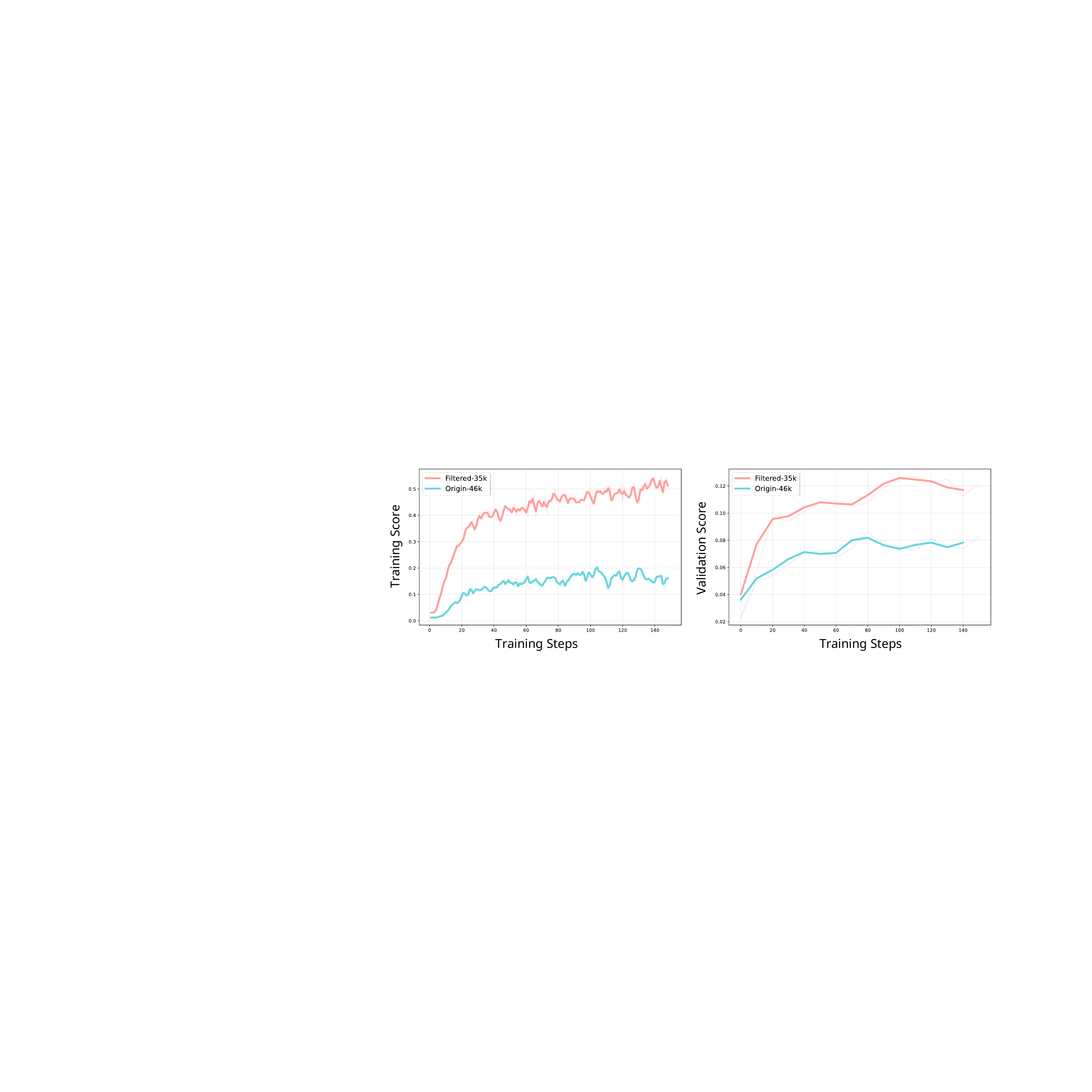}
		\caption{The score curves of the training set and validation set from the newly constructed dataset with 35k data and the original dataset used for the Qwen series models, respectively.}
		\label{fig:new_data}%文中引用该图片代号
	\end{minipage}
\end{figure}

\subsection{System prompt.}
\label{app:sch4}
For the training and inference of Qwen series models, we share the same system prompt as follows.
\begin{tcolorbox}[
    center,
    arc=0mm,
    boxrule=1pt,
    colback=blue!6!white,
    colframe=black,
    colbacktitle=black,
    attach boxed title to top left={yshift=-0.1in,xshift=0.15in},
    boxed title style={boxrule=0pt,colframe=white}
]
 % \verb|[BOS_TOKEN]| 
% \noident\textbf{PROMPT}
% \noindent\textbf{<System>} \\
Your task is to follow a systematic, thorough reasoning process before providing the final solution. This involves analyzing, summarizing, exploring, reassessing, and refining your thought process through multiple iterations. Structure your response into two sections: Thought and Solution. In the Thought section, present your reasoning using the format: “\verb|<think>\n| {thoughts} \verb|</think>\n|”. Each thought should include detailed analysis, brainstorming, verification, and refinement of ideas. After “\verb|</think>\n|” in the Solution section, provide the final, logical, and accurate answer, clearly derived from the exploration in the Thought section. If applicable, include the answer in \verb|\boxed{}| for closed-form results like multiple choices or mathematical solutions. 
\end{tcolorbox}

\subsection{Construction of training dataset for Llama3.1-8B.}
\label{app:sch5}
Since the pretraining of Llama3.1-8B lacks data for long chain-of-thought and mathematical reasoning, its average training reward based on the original dataset used for Qwen2.5-Math remains below 0.2. To enhance training efficiency, we employ three Llama series models (Llama3.1-8B, Llama3.1-8B-Instruct, and Llama3.1-70B) to generate solutions for each problem across four datasets (training set of GSM8k \citep{cobbe2021training}, training set of MATH \citep{dataset_math}, a 46k subset of OpenR1-Math-220k \citep{orz,yan2025learning}, and DeepMath-103k \citep{he2025deepmath}). We then filter out questions whose accuracy of $\text{Pass@8} > 0$, ultimately selecting 35k samples for training the Llama3.1-8B model. Concurrently, due to Llama3.1-8B’s limited instruction-following capability, we simplify the output format requirements in its system prompt. 
\begin{tcolorbox}[
    center,
    arc=0mm,
    boxrule=1pt,
    colback=blue!6!white,
    colframe=black,
    colbacktitle=black,
    attach boxed title to top left={yshift=-0.1in,xshift=0.15in},
    boxed title style={boxrule=0pt,colframe=white}
]
 % \verb|[BOS_TOKEN]| 
% \noident\textbf{PROMPT}
% \noindent\textbf{<System>} \\
Your task is to follow a systematic, thorough reasoning process before providing the final solution. This involves analyzing, summarizing, exploring, reassessing, and refining your thought process through multiple iterations. Structure your response into two sections: Thought and Solution. In the Thought section, each thought should include detailed analysis, brainstorming, verification, and refinement of ideas. In the Solution section, provide the final, logical, and accurate answer, clearly derived from the exploration in the Thought section. If applicable, include the answer in \verb|\boxed{}| for closed-form results like multiple choices or mathematical solutions. Let's think step by step. 
\end{tcolorbox}
The training curves for this 35k dataset and the original 46k training dataset over 150 training steps are shown in the Figure \ref{fig:new_data}. It demonstrates that Llama3.1-8B has significantly higher learning effectiveness on the newly constructed dataset.

\subsection{Scheduling strategies of coefficient to balance \inter{} and \intra{}.}
\label{app:sch6}
We try four different scheduling strategies and show the best of them for each model. Figure \ref{fig:acc_sch} shows the dynamics of $\mu$ in the training process from the \textit{First-Inter-Later-Intra} ($\mu=1 - \Lambda$) and \textit{First-Intra-Later-Inter} ($\mu=\Lambda$). \textit{Cosin-First-Inter-Later-Intra} and \textit{Cosin-First-Intra-Later-Inter} schedule the value of $\mu$ with a cosine annealing function $\Psi$ with restarts and warm-up:
\begin{equation}
\Psi =
\begin{cases} 
\mu_{\max} \cdot \dfrac{s + 1}{w} & \text{if } s < w \\[10pt]
\mu_{\min} + \dfrac{1}{2} (\mu_{\max} - \mu_{\min}) \left(1 + \cos\left(\pi \cdot \dfrac{s^\prime}{\left\lfloor \frac{S-w}{c} \right\rfloor}\right)\right) & \text{if } s \geq w \text{ and }  s^\prime = s-w \text{ mod } \left\lfloor \frac{S-w}{c} \right\rfloor
\end{cases},
\end{equation}
where $c$ denotes the number of restart and $w$ is the warm-up step. $s$ is the current step of training and $S$ is the total step. $\mu_{\max}$ and $\mu_{\min}$ denote the specified maximum and minimum values of $\mu$. We use $c=3$, $w=30$ and $S=150$ for both strategies.

In strategy \textit{Cosin-First-Inter-Later-Intra}, we utilize $\mu=\Psi$ with $\mu_{\max}=1.0$ and $\mu_{\min}=0.4$, respectively, while in strategy \textit{Cosin-First-Intra-Later-Inter}, we utilize $\mu=1-\Psi)$ with  $\mu_{\max}=0.6$ and $\mu_{\min}=0.0$, respectively. The changes in $\mu$ under these strategies are shown in the Figure \ref{fig:step_sch}. Ultimately, based on training performance, we selected strategy \textit{Cosin-First-Inter-Later-Intra} for Qwen2.5-7B and Llama, and strategy \textit{First-Inter-Later-Intra} for Qwen2.5-1.5B.

\begin{figure}[htbp]
	\centering
	\begin{minipage}{0.43\linewidth}
		\centering
		\includegraphics[width=0.9\linewidth]{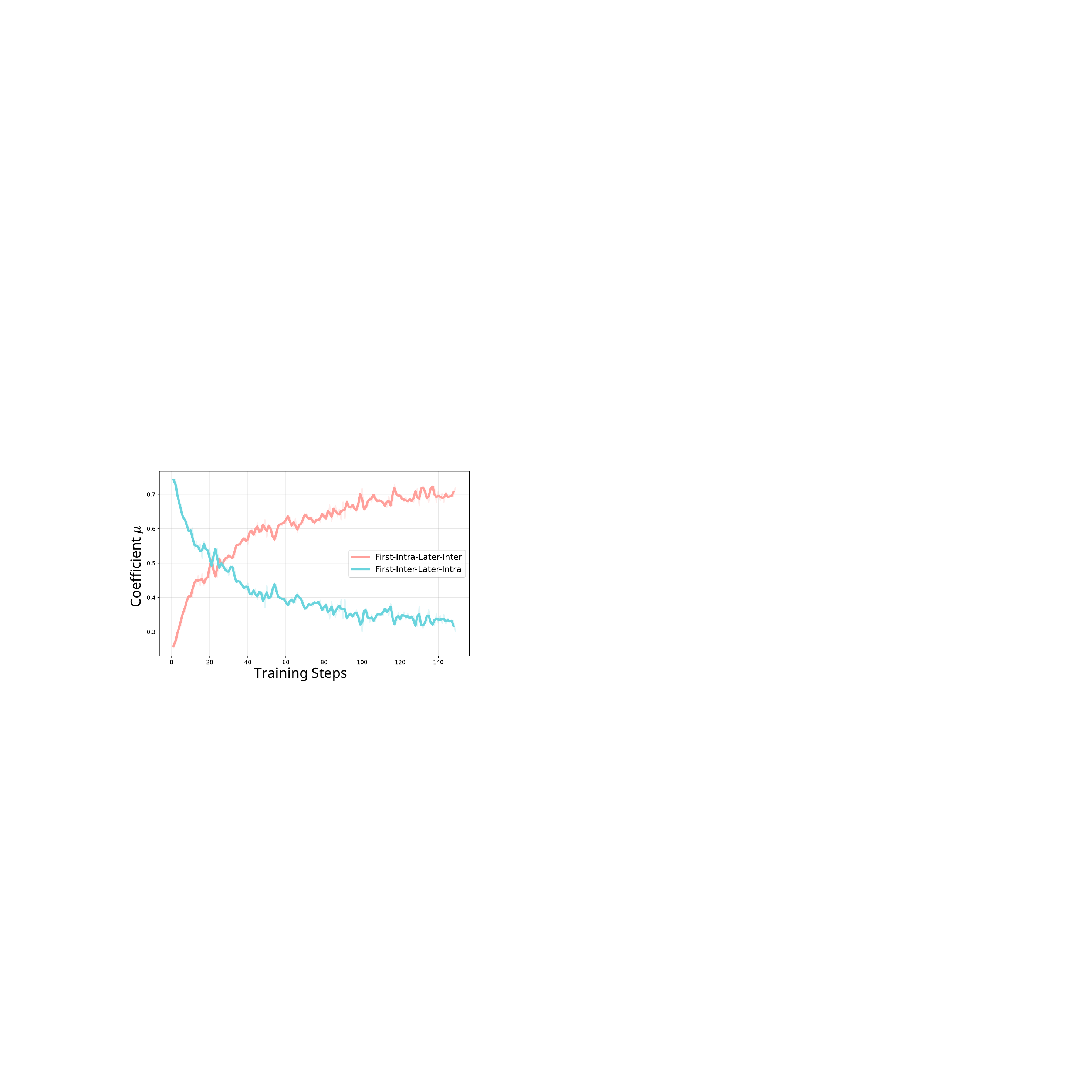}
		\caption{The changes of $\mu$ for two scheduling strategies based on accuracy during training.}
		\label{fig:acc_sch}%文中引用该图片代号
	\end{minipage}
	%\qquad
    \hfill
	\begin{minipage}{0.53\linewidth}
		\centering
		\includegraphics[width=0.9\linewidth]{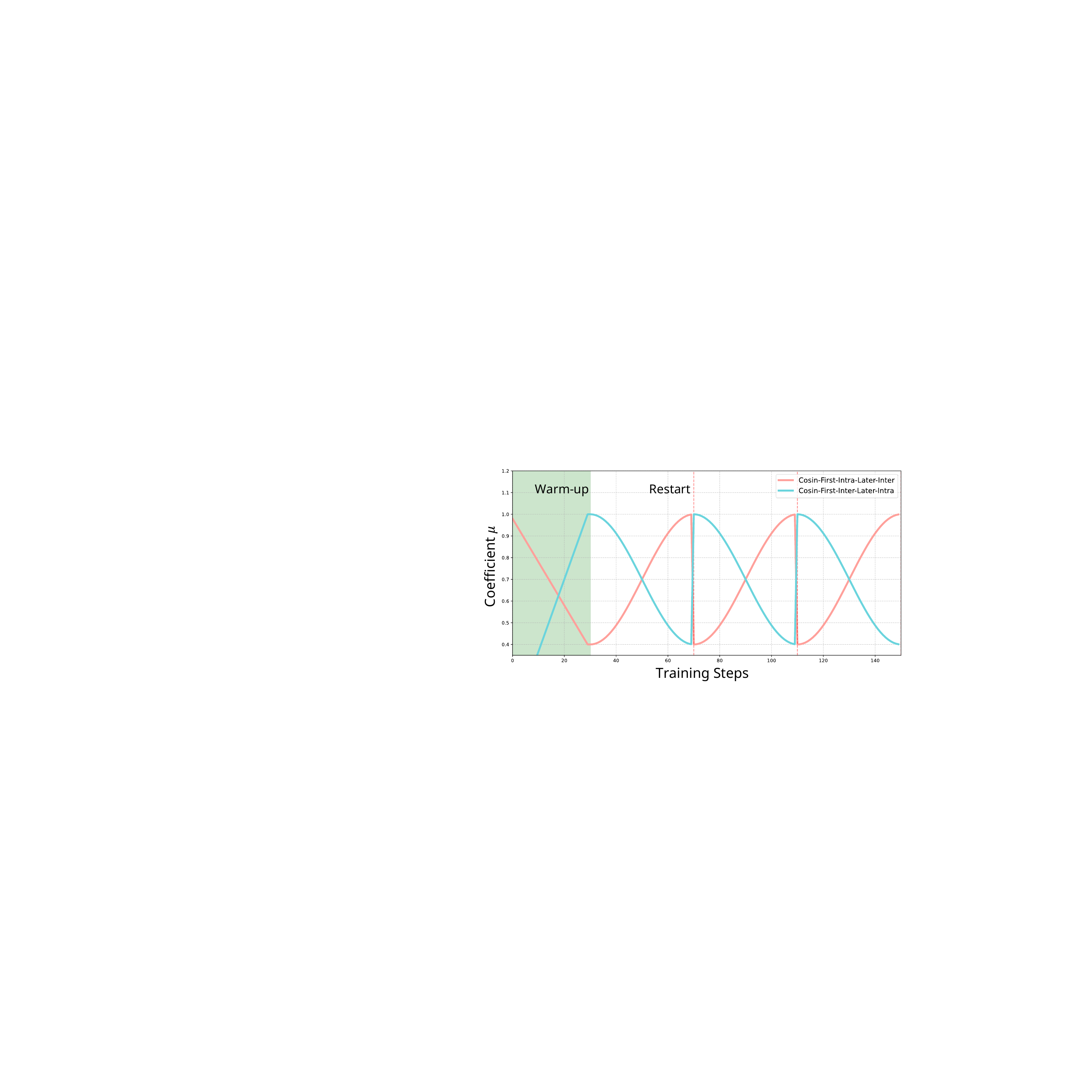}
		\caption{The changes of $\mu$ for two scheduling strategies based on training steps during training.}
		\label{fig:step_sch}%文中引用该图片代号
	\end{minipage}
\end{figure}

\section{Detailed derivation of Theorem \ref{thm1}}
\label{derivation}
\setcounter{theorem}{0}
\begin{theorem}[Situations with clearer advantage signal]
Suppose that condition c is based on numerical comparisons and can be derived through sorting of metrics. Further assume that the sampled response $o$ to query $q$ satisfy condition c with probability $p \in (0,1)$, and $\mathbf{E}_{o \text{~satisfy~}c}[R_o] \neq \mathbf{E}_{o \text{~not ~satisfy~}c}[R_o]$. Then, we have:
\begin{align}
\frac{|\hat{A}^{\text{inter}}_{q,o,t}|}{|\hat{A}^{\text{DR.GRPO}}_{q,o,t}|} > 1 , \text{ only when }|C^+_q|=|C^-_q| \text{ if  }|C^+_q|\text{ is a constant.  }
\end{align}
\end{theorem}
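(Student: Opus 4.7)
The plan is to reparametrise both advantages via the within-group means $\bar R^{+}:=\mathrm{mean}\{R_{o'}:o'\in G_q^{+}\}$ and $\bar R^{-}:=\mathrm{mean}\{R_{o'}:o'\in G_q^{-}\}$, introduce the reward gap $\Delta:=\bar R^{+}-\bar R^{-}$ and the balance fraction $\alpha:=|C_q^{+}|/G$, and then compare magnitudes group by group. Because $c$ is a numerical split obtained by sorting, $|C_q^{+}|$ is deterministic, so $\alpha\in(0,1)$ is a fixed fraction; the hypothesis $\mathbf{E}_{\text{sat }c}[R_o]\neq\mathbf{E}_{\text{not sat }c}[R_o]$ together with $p\in(0,1)$ guarantees $\Delta\neq 0$.

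First, I would establish the convex-combination identity
\begin{equation*}
\hat A^{\text{DR.GRPO}}_{q,o,t}=(1-\alpha)\,\hat A^{\text{inter}}_{q,o,t}+\alpha\,\hat A^{\text{intra}}_{q,o,t},\qquad o\in G_q^{+},
\end{equation*}
and its mirror image with weights swapped on $G_q^{-}$; both follow from the elementary decomposition $\bar R=\alpha\bar R^{+}+(1-\alpha)\bar R^{-}$. This identity exhibits DR.GRPO as a convex blend of the two \methodName{} advantages and specialises to $\tfrac{1}{2}\hat A^{\text{inter}}+\tfrac{1}{2}\hat A^{\text{intra}}$ precisely when $|C_q^{+}|=|C_q^{-}|$, recovering the equality highlighted in Section~\ref{sec:method}.

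Second, I would take group-conditional expectations, getting $\mathbf{E}_{o\in G_q^{+}}[\hat A^{\text{inter}}]=\Delta$ and $\mathbf{E}_{o\in G_q^{+}}[\hat A^{\text{DR.GRPO}}]=(1-\alpha)\Delta$, and symmetrically on $G_q^{-}$. Hence the per-group amplification factors are $G/|C_q^{-}|$ on $G_q^{+}$ and $G/|C_q^{+}|$ on $G_q^{-}$. Each is strictly greater than $1$ for any $\alpha\in(0,1)$, but they coincide in a single, group-independent amplification ratio—so that the inequality $|\hat A^{\text{inter}}|/|\hat A^{\text{DR.GRPO}}|>1$ may be asserted uniformly rather than group-by-group—exactly when $G/|C_q^{+}|=G/|C_q^{-}|$, i.e. $|C_q^{+}|=|C_q^{-}|$. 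An AM-GM check $1/\alpha+1/(1-\alpha)\ge 4$ (equality iff $\alpha=\tfrac12$) further isolates the balanced split as the unique configuration in which the combined two-group amplification is minimised at the common value $2$.

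The main obstacle is passing from the group-conditional expected magnitudes to the pointwise form stated in the theorem. Because $\hat A^{\text{inter}}$ and $\hat A^{\text{DR.GRPO}}$ differ by the fixed offset $\alpha\Delta$ on $G_q^{+}$ (and $(1-\alpha)\Delta$ on $G_q^{-}$), their sign patterns coincide for rewards on the expected side of each group mean, so the absolute-value comparison reduces to a comparison of these offsets; a short sign analysis on $R_o-\bar R$, combined with the Step-1 decomposition, handles the atypical rewards where the two estimators carry opposite signs. The cleanest way to close the argument is to take absolute values in the convex-combination identity and invoke the reverse triangle inequality, showing that simultaneous amplification on both groups can only coexist when the two convex weights agree—which is exactly the balanced-split condition.
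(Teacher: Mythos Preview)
Your approach conditions on the wrong event, and because of that it cannot reach the ``only when'' conclusion. You compute the amplification ratio conditionally on membership in the \emph{observed} groups $C_q^{\pm}$: on $C_q^{+}$ you obtain $1/(1-\alpha)$ and on $C_q^{-}$ you obtain $1/\alpha$, both strictly larger than $1$ for \emph{every} $\alpha\in(0,1)$. That already shows this route cannot deliver the theorem: if the inequality holds on both observed groups regardless of $\alpha$, nothing singles out $\alpha=\tfrac12$. Your remedy---requiring the two group ratios to \emph{coincide}, or invoking AM--GM on $1/\alpha+1/(1-\alpha)$---characterises \emph{equal} amplification, which is a different property from amplification exceeding $1$ and is not what the theorem asserts. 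The convex-combination identity and the reverse-triangle step in your final paragraph do not repair this, since the pointwise offset $\alpha\Delta$ you isolate is again a statement about the observed split, not about condition $c$.

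The paper's proof instead conditions on whether a response \emph{truly satisfies the underlying condition $c$}, which is distinct from membership in $C_q^{+}$ whenever the fixed split fraction $\lambda=|C_q^{+}|/|G_q|$ differs from the unknown probability $p$ of satisfying $c$. When $\lambda<p$, some $c$-satisfying responses fall into $C_q^{-}$ and are compared against a baseline $\bar R^{+}$ that is itself built entirely from $c$-satisfying responses, diluting the inter-group signal; the resulting $c$-conditional ratio is $\lambda/((1-\lambda)p)$, which can drop below $1$. Demanding that both $c$-conditional ratios exceed $1$ for \emph{all} $p\in(0,1)$ then yields $\lambda<1/(2-p)$ when $\lambda\ge p$ and $\lambda>p/(1+p)$ when $\lambda<p$; letting $p\to 0$ and $p\to 1$ forces $\lambda=\tfrac12$. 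The free parameter $p$---entirely absent from your analysis---is what makes the balanced split necessary rather than merely symmetric.
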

\begin{proof}[Proof of Theorem \ref{thm1}]
Given a prompt $q$, the set of all responses that satisfy condition $c$ can be denoted as $\mathcal{C}$. We use $p = \mathrm{P}(o \in\mathcal{C} | q, \theta)\in (0,1)$ to describe the probability that a response $o$ satisfying condition $c$ is provided to the prompt $q$ by an LLM with parameter $\theta$. Assuming that when condition $c$ is satisfied, the probability of the correct response is $a_+$, and when condition $c$ is not satisfied, the probability of the correct response is $a_-$. Denoting the correctness of the response $o$ to query $q$ as $R_o$, then we have:

\begin{align}
\mathbf{E}_{o\in \mathcal{C}}[R_o] = a_+ \text{ and }
\mathbf{E}_{o\notin \mathcal{C}}[R_o] = a_- ~.
\end{align}

\subsection{DR.GRPO}

Sampling a group of responses $G_q$ to the prompt $q$, the advantage $\hat{A}^{\text{DR.GRPO}}_{q,o,t}$ of a response $o$ can be calculated as: 
\begin{align}
\hat{A}^{\text{DR.GRPO}}_{q,o,t}  =  R_{o} - \text{mean}(\{R_{o^\prime }|o^\prime \in G_q\}) .
\end{align}
We use $A^{\text{DR.GRPO}}(o,c)$ to denote the \textbf{average} advantage of the responses that \textbf{satisfy} condition $c$, and utilize $\tilde{A}^{\text{DR.GRPO}}(o,c)$ to describe the average advantage of the other responses that \textbf{do not satisfy} condition $c$.

\begin{align}
    \notag A^{\text{DR.GRPO}}(o,c)&=\mathbf{E}_{o\in \mathcal{C}}[ \hat{A}^{\text{DR.GRPO}}_{q,o,t}] \\
    \notag&=\mathbf{E}_{o\in\mathcal{C}}[R_o] - \mathbf{E}_{o\in G_q}[R_o] \\
    \notag&= a_+ - [\mathrm{P}(o \in \mathcal{C} |q, \theta) \mathbf{E}_{o\in \mathcal{C}}[R_o] + \mathrm{P}(o \notin \mathcal{C} | q, \theta) \mathbf{E}_{o\notin \mathcal{C}}[R_o]] \\
    &= a_+ - pa_+ - (1-p) a_- = (a_+-a_-) (1-p) ~,\\
    \tilde{A}^{\text{DR.GRPO}}(o,c)&=(a_--a_+)p ~.
\end{align}

\subsection{Inter-group Advantage (\inter{})}
We sort the sampled responses based on the numerical value considered by condition $c$, and split them at position $k$ into two groups. Based on the symmetry of the inter-group advantage, we can denote these $k$ responses as $C^+_q$. We use $\lambda := \frac{|C_q^+|}{|G_q|}$ to simplify the notation, and denote the average inter-group advantage with $A_{\lambda}(o,c,p)$ for the responses that \textbf{satisfy} condition $c$. $\tilde{A}_{\lambda}(o,c,p)$ is utilized to represent the average inter-group advantage of those responses that \textbf{do not satisfy} condition $c$.

Then, we can compute the average reward of each group as follows.
\begin{align}
    \notag\mathbf{E}_{o\in C^+_q}[R_o] & = [\mathrm{P}(o \in \mathcal{C} | q, \theta, o\in C^+_q) \mathbf{E}_{o\in\mathcal{C}}[R_o] + \mathrm{P}(o \notin \mathcal{C} | q, \theta, o\in C^+_q) \mathbf{E}_{o\notin\mathcal{C}}[R_o]] \\
    \notag&=\left\{
                \begin{array}{ll}
                 \frac{p}{\lambda} \mathbf{E}_{o\in\mathcal{C}}[R_o] + \frac{\lambda - p}{\lambda}\mathbf{E}_{o\notin\mathcal{C}}[R_o], \text{if} ~ \lambda\geq p\\
                  \\
                \mathbf{E}_{o\in\mathcal{C}}[R_o], \text{if} ~ \lambda<p\\
                \end{array}\right.\\ 
    &=\left\{
                \begin{array}{ll}
                 \frac{p}{\lambda} a_+ + \frac{\lambda - p}{\lambda}a_-, \text{if} ~ \lambda\geq p\\
                  \\
                a_+, \text{if} ~ \lambda<p\\
                \end{array}\right.~,\\
    \notag\mathbf{E}_{o\notin C^+_q}[R_o] &= [\mathrm{P}(o \in \mathcal{C} | q, \theta, o\notin C^+_q) \mathbf{E}_{o\in\mathcal{C}}[R_o] + \mathrm{P}(o \notin \mathcal{C} | q, \theta, o\notin C^+_q) \mathbf{E}_{o\notin\mathcal{C}}R_o] \\
    \notag&=\left\{
                \begin{array}{ll}
                 \mathbf{E}_{o\notin\mathcal{C}}[R_o], \text{if} ~ \lambda\geq p\\
                  \\
                \frac{p-\lambda}{1-\lambda} \mathbf{E}_{o\in\mathcal{C}}[R_o] + \frac{1 - p}{1-\lambda}\mathbf{E}_{o\notin\mathcal{C}}[R_o], \text{if} ~ \lambda<p\\
                \end{array}\right.\\
    &=\left\{
                \begin{array}{ll}
                 a_-, \text{if} ~ \lambda\geq p\\
                  \\
                \frac{p-\lambda}{1-\lambda} a_+ + \frac{1 - p}{1-\lambda}a_-, \text{if} ~ \lambda<p\\
                \end{array}\right. ~.
\end{align}
Therefore, we can calculate the average advantages:
\begin{align}
    \notag A_{\lambda}(o,c,p) & = \mathbf{E}_{o\in \mathcal{C}}[R_o- \mathrm{P}(o \in C_q^+ | q, \theta,o\in \mathcal{C})\mathbf{E}_{o^{\prime}\notin C^+_q}[R_{o^{\prime}}] - \mathrm{P}(o \notin C_q^+ | q, \theta, o\in \mathcal{C})\mathbf{E}_{o^{\prime}\in C^+_q}[R_{o^{\prime}}]]\\
    \notag& =\mathbf{E}_{o\in \mathcal{C}}[R_o] - \left\{
                \begin{array}{ll}
                  a_-, \text{if} ~ \lambda\geq p\\
                  \\
                \frac{\lambda}{p}[\frac{p-\lambda}{1-\lambda} a_+ + \frac{1 - p}{1-\lambda}a_-], \text{if} ~ \lambda<p\\
                \end{array}\right.-\left\{
                \begin{array}{ll}
                 0, \text{if} ~ \lambda\geq p\\
                  \\
                \frac{p-\lambda}{p}a_+, \text{if} ~ \lambda<p\\
                \end{array}\right.\\
    & =\left\{
                \begin{array}{ll}
                  a_+ - a_-, \text{if} ~ \lambda\geq p\\
                  \\
                \frac{\lambda(1 - p)}{p(1-\lambda)}(a_+ - a_-), \text{if} ~ \lambda<p\\
                \end{array}\right. ~,
\end{align}
\begin{align}
    \notag \tilde{A}_{\lambda}(o,c,p) & = \mathbf{E}_{o\notin \mathcal{C}}[R_o- \mathrm{P}(o \in C_q^+ | q, \theta,o\notin \mathcal{C})\mathbf{E}_{o^{\prime}\notin C^+_q}[R_{o^{\prime}}] - \mathrm{P}(o \notin C_q^+ | q, \theta, o\notin \mathcal{C})\mathbf{E}_{o^{\prime}\in C^+_q}[R_{o^{\prime}}]]\\
    \notag& =\mathbf{E}_{o\notin \mathcal{C}}[R_o] - \left\{
                \begin{array}{ll}
                  \frac{\lambda-p}{1-p}a_-, \text{if} ~ \lambda\geq p\\
                  \\
                0, \text{if} ~ \lambda<p\\
                \end{array}\right.-\left\{
                \begin{array}{ll}
                 \frac{1-\lambda}{1-p}[\frac{p}{\lambda} a_+ + \frac{\lambda - p}{\lambda}a_-], \text{if} ~ \lambda\geq p\\
                  \\
                a_+, \text{if} ~ \lambda<p\\
                \end{array}\right.\\
    & =\left\{
                \begin{array}{ll}
                  \frac{p(1-\lambda)}{\lambda(1 - p)}(a_- - a_+), \text{if} ~ \lambda\geq p\\
                  \\
                 a_- - a_+, \text{if} ~ \lambda<p\\
                \end{array}\right. ~.
\end{align}
\subsection{Comparison}
We have the ratio between inter-group advantage and DR.GRPO:
\begin{align}
\frac{|A_{\lambda}(o,c,p)|}{|A^{\text{DR.GRPO}}(o,c)|} = 
\begin{cases} 
\frac{1}{1-p} > 1 & \text{if } \lambda\geq p\\
\frac{\lambda}{(1 - \lambda)p} & \text{if } \lambda< p
\end{cases},
\end{align}
and 
\begin{align}
\frac{|\tilde{A}_{\lambda}(o,c,p)|}{|\tilde{A}^{\text{DR.GRPO}}(o,c)|} = 
\begin{cases} 
\frac{1-\lambda}{\lambda(1 - p)} & \text{if } \lambda\geq p\\
\frac{1}{p} > 1 & \text{if } \lambda< p
\end{cases}.
\end{align}
To accentuate the impact of a specific condition on advantages, the following is required:
\begin{align}
\frac{1-\lambda}{\lambda(1 - p)}>1 \text{ if } \lambda\geq p,
\text{ and } \frac{\lambda}{(1 - \lambda)p}>1 \text{ if } \lambda< p.
\end{align}
Then we have
\begin{align}
\lambda<\frac{1}{2-p} \text{ if } \lambda\geq p,
\text{ and } \lambda>\frac{p}{1+p} \text{ if } \lambda< p .
\end{align}
If $|C^+_q|$ is a constant, $\lambda$ is also a constant. Due to $\frac{1}{2-p}>\frac{1}{2}$ and $\frac{p}{1+p} <\frac{1}{2}$, $\lambda$ needs to satisfy $\lambda\leq\frac{1}{2}$ and $\lambda\geq\frac{1}{2}$ at the same time, consequently restricting the value of $\lambda$ to 0.5. In this way, we have $\frac{|C^+_q|}{|C^+_q|+|C^-_q|}=0.5$, and finally $|C^+_q|=|C^-_q|$
\end{proof}

% \begin{tcolorbox}[
%     enhanced,breakable=true,
%     title=Response from RL Baseline~~~~~~~~~~~~~~~~~~~~~~~~~~~~~~~~~~~~~~~~~~~~~~~~~~~~~~~~~~~~~~~Total Length: 1043 Tokens,
%     colback=white,
%     colframe=lightmint,
%     coltitle=black,
%     fonttitle=\bfseries,
%     arc=1mm,
%     boxrule=0.6mm,
%     left=1mm,    
%     right=1mm,   
%     top=1mm,     
%     bottom=1mm,  
% ]
% ssss
% \end{tcolorbox}

\end{document}